\theoremstyle{plain}
\newtheorem{theorem}{Theorem}[section]
\theoremstyle{definition}
\newtheorem{assumption}[theorem]{Assumption}
\theoremstyle{remark}
\icmltitlerunning{Personalized Federated Learning for Spatio-Temporal Forecasting: A Dual Semantic Alignment-Based Contrastive Approach}
\begin{document}

\twocolumn[
\icmltitle{Personalized Federated Learning for Spatio-Temporal Forecasting: A Dual Semantic Alignment-Based Contrastive Approach}




\begin{icmlauthorlist}
\icmlauthor{Qingxiang Liu}{yyy}
\icmlauthor{Sheng Sun}{yyy}
\icmlauthor{Yuxuan Liang}{zzz}
\icmlauthor{Jingjing Xue}{yyy}
\icmlauthor{Min Liu}{yyy}
\end{icmlauthorlist}

\icmlaffiliation{yyy}{Institute of Computing Technology, Chinese Academy of Sciences, Beijing, China}
\icmlaffiliation{zzz}{The Hong Kong University of Science and Technology (Guangzhou)}

\icmlcorrespondingauthor{Min Liu}{liumin@ict.ac.cn}

\icmlkeywords{Contrastive Learning,
Federated Learning,
Spatio-temporal Forecasting,
Time Series Analysis}

\vskip 0.3in
]



\printAffiliationsAndNotice  

\begin{abstract}
The existing federated learning (FL) methods for spatio-temporal forecasting fail to capture the inherent spatio-temporal heterogeneity, which calls for personalized FL (PFL) methods to model the spatio-temporally variant patterns. While contrastive learning approach is promising in addressing spatio-temporal heterogeneity, the existing methods are noneffective in determining negative pairs and can hardly apply to PFL paradigm. To tackle this limitation, we propose a novel PFL method, named \underline{\textbf{F}}ederated d\underline{\textbf{U}}al s\underline{\textbf{E}}mantic a\underline{\textbf{L}}ignment-based contra\underline{\textbf{S}}tive learning (FUELS), which can adaptively align positive and negative pairs based on semantic similarity, thereby injecting precise spatio-temporal heterogeneity into the latent representation space by auxiliary contrastive tasks. From temporal perspective, a hard negative filtering module is introduced to dynamically align heterogeneous temporal representations for the supplemented intra-client contrastive task. From spatial perspective, we design lightweight-but-efficient prototypes as client-level semantic representations, based on which the server evaluates spatial similarity and yields client-customized global prototypes for the supplemented inter-client contrastive task. Extensive experiments demonstrate that FUELS outperforms state-of-the-art methods, with communication cost decreasing by around 94\%.

\end{abstract}

\section{Introduction}
Spatio-temporal forecasting aims at predicting the future trends with historical records from distributed facilities.
Actually, the facilities may be deployed by different organizations and therefore the collected data can hardly be uploaded to a public server for training due to copyright protection.
For example, in wireless traffic prediction, different telecom operators may hesitate to share access to traffic data of base stations (BSs) to others.
Federated Learning (FL) enables clients to collaboratively optimize a shared model without the disclosure of private data and simultaneously can achieve comparable performance with centralized learning methods \cite{mcmahan2017communication}.
Therefore, FL generates great promise in the task of spatio-temporal forecasting and many sophisticated FL methods have been proposed to effectively evaluate the spatio-temporal correlation \cite{li2021privacy,li2022federated}. 
Given that the server has only access to local model parameters, these FL methods cannot exploit a joint spatio-temporal correlation-capturing module like centralized methods \cite{li2018dcrnn_traffic,li2021spatial}.
In general, most of them decouple the spatio-temporal correlation, and evaluate spatial correlation in the server by Graph Neural Networks \cite{zhao2020cellular,lin2021data} and temporal correlation at local clients by Recurrent Neural Networks \cite{zhu2021joint,mahdy2020clustering}.
\begin{figure}[!t]
    \vskip 0.1in
    \begin{center}
        \centering
        \includegraphics[width=0.9\linewidth]{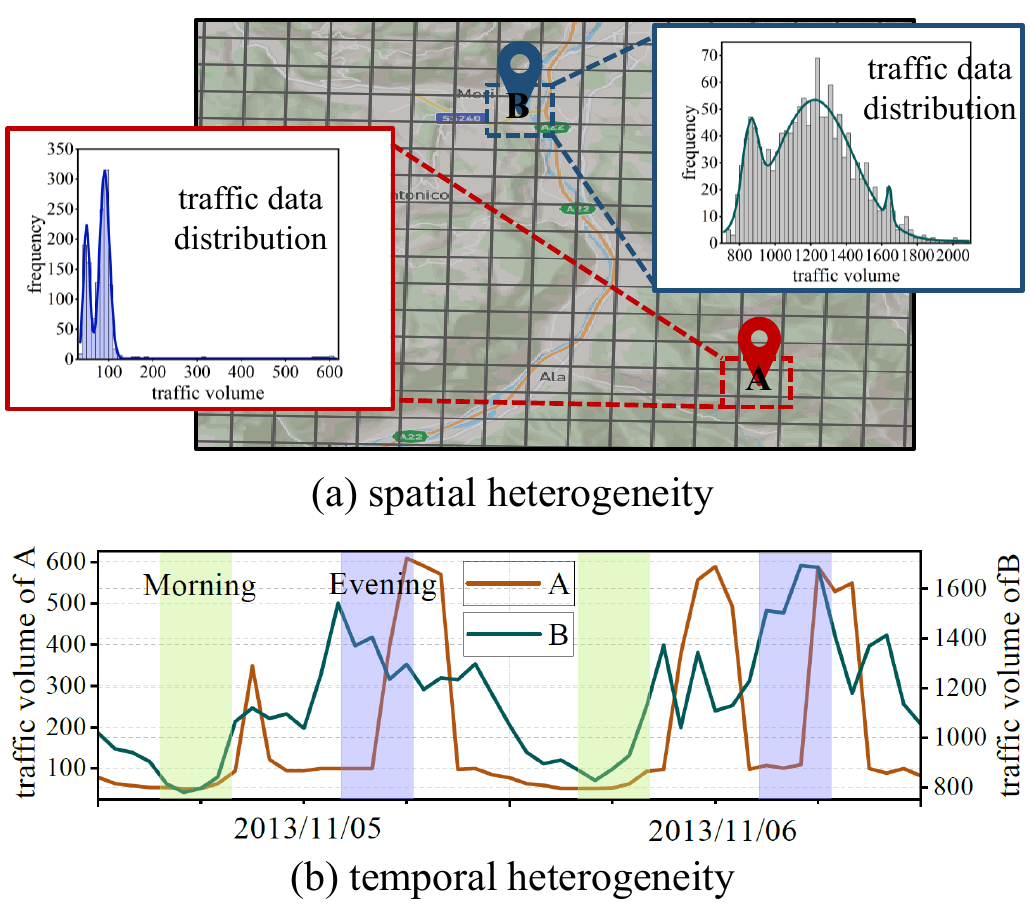}
        \caption{Spatio-temporal heterogeneity inside traffic flows.}
        \label{motivation}
    \end{center}
    \vskip -0.1in
\end{figure}

Nevertheless, these FL forecasting methods cannot capture the inherent spatio-temporal heterogeneity by adopting shared model parameters across clients.
As shown in Figure \ref{motivation}, traffic patterns are spatio-variant (\textit{traffic distribution of two clients varies}) and time-variant (\textit{traffic patterns in the morning and evening are diverse}).
The spatio-temporal heterogeneous traffic patterns call for personalized federated learning (PFL) methods to tackle with differentiated representations in the latent space from spatio-temporal level.
However, current PFL methods all focus on increasing the generalization ability of local models by sharing diverse knowledge across clients \cite{karimireddy2020scaffold,tan2022towards,chen2023efficient}, which can hardly work on coupled spatio-temporal heterogeneity.


Contrastive learning has emerged as an effective technique which shows great promise in improving PFL paradigm to address spatio-temporal heterogeneity.
Generally, contrastive learning can maximize the agreement between similar semantic representations (\textit{positive pairs}), which can increase representations' differentiation, thereby injecting spatio-temporal heterogeneity into the latent representation space.
Since there are no explicit semantic labels in spatio-temporal forecasting like the task of image classification, it is a challenge to determine positive and negative pairs.
In \cite{ji2023spatio}, for spatial heterogeneity, learnable embeddings are introduced for clustering regions' representations, which can hardly apply to PFL paradigm, due to the exposure of raw data and frequent communication in the process of parameter optimization.
For temporal heterogeneity, region-level and city-level representations of the same time stamps make up positive pairs, otherwise negative pairs, which ignores semantic similarity among representations of different time stamps (termed \textit{hard negatives}).
Therefore, \textbf{it remains a question how to exactly determine negative and positive pairs when adopting contrastive learning in PFL paradigm for tackling spatio-temporal heterogeneity.}

To this end, we propose a novel PFL method, named \underline{\textbf{F}}ederated d\underline{\textbf{U}}al s\underline{\textbf{E}}mantic a\underline{\textbf{L}}ignment-based contra\underline{\textbf{S}}tive learning (FUELS), which can dynamically establish positive and negative pairs based on spatio-temporal representations' similarity, hence guaranteeing the validity of spatio-temporal heterogeneity modeling.
From temporal perspective, we propose a hard negative filtering module for each client, which can adaptively align true negative pairs for intra-client contrastive task so as to inject temporal heterogeneity into temporal representations.
From spatial perspective, we define prototypes as client-level semantic representations and directly serve as the communication carrier, which enables sharing denoising knowledge across clients in a communication-efficient manner. We propose a Jensen Shannon Divergence (JSD)-based aggregation mechanism, which firstly aligns homogeneous and heterogeneous prototypes from clients and then yields client-customized global positive and negative prototypes for inter-client contrastive task to increase the spatial differentiation of local models.

We summarize the key contributions as follows.
\begin{itemize}[leftmargin=*]
    \item 
    To the best of our knowledge, this is the first PFL method towards spatio-temporal heterogeneity, where local training is enhanced by two well-crafted contrastive loss items to increase prediction models' ability of discerning spatial and temporal heterogeneity.
    \item We design a hard negative filtering module to dynamically evaluate the heterogeneous degree among temporal representations and establish true negative pairs. 
    \item We propose a JSD-based aggregation mechanism to generate particular global positive and negative prototypes for clients, striking a balance between sharing semantic knowledge and evaluating spatial heterogeneity.
    \item We validate the effectiveness and efficiency of the proposed FUELS from both theoretical analysis and extensive experiments. The numerical results show that FUELS can achieve consistently superior forecasting performance and simultaneously decrease the communication cost by around 94\%.
\end{itemize}

\section{Problem Formulation}

One typical task of spatio-temporal forecasting is wireless traffic prediction. 
Given $N$ BSs as clients in the FL paradigm, the $n$-th BS has its traffic observations $\mathcal{V}_n = \left\{ v_n^k \left| k \in [1, K] \right. \right\}$, where $v_n^k \in \mathbb{R}$ denotes the detected traffic volume of client $n$ at the $k$-th time stamp.
Based on sliding window mechanism \cite{zhang2021dual}, $\mathcal{V}_n$ can be divided into input-output pairs (samples), which is denoted as $\mathcal{D}_n = \left\{ \left( \textbf{x}_n^k; y_n^k   \right) \right\} $.
Thereinto, $y_n^k = v_n^k$ denotes the value to be predicted.
$\textbf{x}_n^k = \left( \textbf{cv}_n^k, \textbf{pv}_n^k \right)$, where $\textbf{cv}_n^k \in \mathbb{R}^{c}, \textbf{pv}_n^k \in \mathbb{R}^{q}$, and $\textbf{x}_n^k \in \mathbb{R}^{c+q}$.
$\textbf{cv}_n^k = \left(v_n^{k-c}, \cdots, v_n^{k-2}, v_n^{k-1} \right)$ and $\textbf{pv}_n^k =(v_n^{k-qp}, \cdots$, $ v_n^{k-2p}, v_n^{k-p})$ denote two historical traffic sequences to evaluate the closeness and periodicity of traffic data.
$c$, $q$, and $p$ represent the size of close window, the size of periodic window, and the periodicity of traffic volume.

Since BSs are deployed at different locations, such as downtown area and suburb \emph{etc.}, the detected traffic data across different BSs have diverse distributions. 
Therefore, in PFL paradigm, each BS tends to learn its own prediction model for performance improvement.
The objective can be formulated as
\begin{equation}
    \mathop {\min }\limits_{\left\{ {w_n} \right\}} \frac{1}{N}\sum\limits_{n = 1}^N {\frac{{|{{\mathcal{D}}_n}|}}{D}} \sum\limits_{\left( {{\textbf{x}}_n^k,y_n^k} \right) \in {{\mathcal{D}}_n}} {{\cal L}\left( {f\left( {{w_n};{\textbf{x}}_n^k} \right),y_n^k} \right)}.
\end{equation}
where $f(\cdot)$ represents prediction model. $\mathcal{L}(\cdot)$ denotes the loss function.
$D = \sum\nolimits_{n = 1}^N {\left| {{\mathcal{D}_n}} \right|} $ denotes the total number of samples over all $N$ clients. $w_n (1\le n\le N)$ denotes the model parameters of client $n$.

\section{Methodology}
\begin{figure*}[!ht]
\vskip 0.1in
    \centering
    \includegraphics[width=0.95\linewidth]{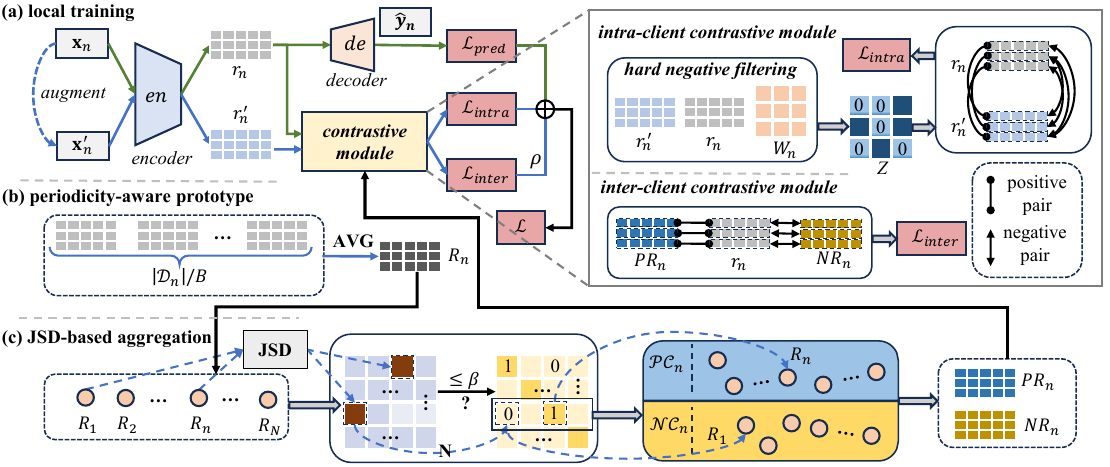}
    \caption{An overview of the proposed FUELS. (a) Each client performs local training by the supplemented inter- and intra-client contrastive loss items for spatio-temporal heterogeneity. (b) The designed periodicity-aware prototype works as the communication carrier. (c) The JSD-based aggregation generates client-customized global prototypes.}
    \label{method}
    \vskip -0.1in
\end{figure*}
We firstly provide an overarching depiction of FUELS, as shown in Figure \ref{method}. In macroscopic view, local training is enhanced by two designed contrastive loss items (Figure \ref{method} (a)). From temporal perspective, we propose a hard negative filtering module for true negative alignment.
From spatial perspective, we employ prototypes as communication carrier (Figure \ref{method} (b)), and generate specific global prototypes (Figure \ref{method} (c)) for spatial heterogeneity evaluation.
We elaborate the technical details in the following parts.

\subsection{Encoder and Decoder} 
In FUELS, we split the prediction model $f$ into two components.
(i) $encoder$ $ en(\delta_n;\textbf{x}_n): \mathbb{R}^{B \times (c+q)} \rightarrow \mathbb{R}^{B \times d_r}$, is parameterized by $\delta_n$ and maps $\textbf{x}_n$ into a latent space with $d_r$ dimensions, where $\left (\textbf{x}_n, \textbf{y}_n \right)$ denotes a batch of training samples from $\mathcal{D}_n$ with batch size $B$.
Actually, $\textbf{x}_n=(\textbf{cv}_n, \textbf{pv}_n)$, where $\textbf{cv}_n \in \mathbb{R}^{B\times c}$ and $\textbf{pv}_n \in \mathbb{R}^{B\times p}$. 
we adopt two widely used Gated Recurrent Unit (GRU) models to evaluate the closeness and periodicity from $\textbf{cv}_n$ and $\textbf{pv}_n$, denoted as closeness-GRU $(\rm{GRU}_c)$ and periodicity-GRU $(\rm{GRU}_p)$ respectively.
${\rm GRU_c}(\delta_n^c;\textbf{cv}_n)$: $\mathbb{R}^{B\times c} \rightarrow \mathbb{R}^{B \times \frac{{d_r}}{2}}$, is parameterized by $\delta_n^c$.
${\rm GRU_p}(\delta_n^p; \textbf{pv}_n)$:
$\mathbb{R}^{B \times q} \rightarrow \mathbb{R}^{B \times \frac{{d_r}}{2}}$, is parameterized by $\delta_n^p$.
We denote the \textbf{representation} of $\textbf{x}_n$ output by the encoder $en$ as $r_n \in \mathbb{R}^{B\times d_r}$, which can be formulated as
\begin{align}
    r_n &= en({\delta_n}; \textbf{x}_n)
    \notag
    \\
    &= { \textbf{concat}}\left[ {\rm GRU_c} ({\delta}_n^c;\textbf{cv}_n) ; {\rm GRU_p} ({\delta}_n^p;\textbf{pv}_n) \right].
\end{align}

(ii) $decoder$ $de(\phi_n ;\cdot): \mathbb{R}^{B \times d_r} \rightarrow \mathbb{R}^{B}$, is parameterized by $\phi_n$ and generates the final predicted result $\hat{\textbf{y}}_n$ from the representation, i.e., $\hat{\textbf{y}}_n =  de(\phi_n; r_n)$.
Similar with many prediction methods, we adopt the fully connected layers as the decoder $de$.

\subsection{Intra-Client Contrastive Task for Temporal Heterogeneity}
In \cite{liu2022contrastive}, hard negatives are filtered out based on traffic closeness but those out of the preset closeness scope are still utilized to form negative pairs and may perturb the latent semantic space. 
Therefore, we propose a hard negative filtering module to align representations by semantic similarity and then design a intra-client contrastive task to maximize the divergence of negative pairs, thereby injecting precise temporal heterogeneity into temporal representations.

Firstly, we adopt the temporal shifting manner in \cite{liu2022contrastive} to generate the augmented dataset for client $n$, which is denoted as ${\mathcal{D}}'_n$.
For $\textbf{x}_{n}$, the corresponding augmented batch in $\mathcal{D}'_n$ is denoted as $\textbf{x}'_{n}$. 
The representation of $\textbf{x}'_{n}$ is denoted as $r'_{n} \in \mathbb{R}^{B \times d_r}$.
Let $r_{n,i}' \in \mathbb{R}^{d_r}$ denote the $i$-th row of $r_{n}'$, i.e., the temporal representation of the $i$-th time stamp of $\textbf{x}'_{n}$.

The procedure of hard negative filtering is formulated as
\begin{align}
    SM &= {\exp \left( {{\rm sim}\left( {r_{n},r_{n}'} \right)/\tau } \right)},\\
    Z &= ReLu \left( SM \odot W_n \right),\\
    \mathcal{TN}_b &= \left\{ i | Z_{b,i} > 0   \right\},
\end{align}
where $SM\in \mathbb{R}^{B\times B}$ denotes the similarity matrix, ${\rm sim}(\cdot , \cdot)$ denotes the cosine similarity and $\tau$ denotes the temperature factor.
$W_n \in \mathbb{R}^{B\times B}$ denotes the learnable filtering matrix and $\odot$ denotes the Hadamard product. 
If $Z_{b,i} = 0 (i\neq b)$, we treat $r_{n,i}'$ as a hard negative of $r_{n,b}$. 
If $Z_{b,i} > 0$, $r_{n,i}'$ is seen as a true negative of $r_{n,b}$ and $i$ is added into $\mathcal{TN}_b$. 
Ideally, $Z_{b,b}(1\le b\le B)$ should be equal to 0 as we also filter out the positive pairs by $W_n$ (detailed results in Section \ref{case}).

By this procedure, representations of different time stamps but similar semantics can also be filtered out. The subsequent contrastive task will repel the representations with diverse semantics so as to validly inject temporal heterogeneity into representations.
Finally, we can obtain the intra-client contrastive loss item as
\begin{equation}
    {\mathcal{L}_{intra}} = \frac{1}{B}\sum\limits_{b = 1}^B { - \log \frac{SM_{b,b}}{{ SM_{b,b} + \sum\nolimits_{i \in \mathcal{TN}_b } {Z_{b,i}} }}} ,
\end{equation}
where $SM_{b,b}$ represents the similarity of positive pair and $Z_{b,i}$ denotes that of negative pair.
Under the constraint of $\mathcal{L}_{intra}$, the prediction model will generate traffic representations with great temporal distinguishability.

\subsection{Inter-Client Contrastive Task for Spatial Heterogeneity}
In this section, we focus on how to preserve the spatial heterogeneity in the communication carrier of FL paradigm and fulfill the aim of assisting local optimization and simultaneously modeling spatial heterogeneity by shared knowledge across clients.

\subsubsection{\textbf{Local Prototype Definition}}
In conventional FL paradigm, model parameters serve as communication carrier, which are not compact and hardly preserve the spatial heterogeneity.
Inspired by prototype learning, where prototypes represent the representations of samples with the same labels and thus can carry the class-specific semantic knowledge, we can extract the client-level prototype to carry the client-specific knowledge.

To achieve this goal, we start with a concatenation-based prototype, which is formulated as
\begin{equation}
    {R_n} = {{\textbf{concat}}}\left[ {{r_n}\left| {{r_n} = en\left( {{\textbf{x}_n}} \right),\forall \left( {{\textbf{x}_n},{\textbf{y}_n}} \right) \in {\mathcal{D}_n}} \right.} \right],
\end{equation}
where $R_n$ denote the local prototype of client $n$. $\mathcal{D}_n$ can be divided into $|\mathcal{D}_n|/B$ batches, with batch size equal to $B$ and hence $R_n\in \mathbb{R}^{|\mathcal{D}_n| \times d_r}$. 
Each client should transmit ${|\mathcal{D}_n| \times d_r}$ parameters to the server at each training round, which potentially incurs more communication overhead than conventional FL methods with $\mathcal{D}_n$ increasing.

Given the periodicity of traffic data, if we set the batch size $B$ equal to the traffic periodicity $p$,  representations of different batches will have similar distribution, as they all carry the traffic knowledge within a periodicity.
Therefore, we can fuse these representations from different batches to obtain the periodicity-aware prototype as
\begin{equation}
    {R_n} = {{\textbf{AVG}}}\left[ {{r_n}\left| {{r_n} = en\left( {{\textbf{x}_n}} \right),\forall \left( {{\textbf{x}_n},{y_n}} \right) \in {\mathcal{D}_n}} \right.} \right],
\end{equation}
where \textbf{AVG} represents the averaging operation.
Therefore, $R_n \in \mathbb{R}^{B\times d_r}$. 
The number of communication parameters keeps stable regardless of local dataset size, and is significantly fewer than the parameter size of local prediction model (see Section \ref{rq1}).
Furthermore, compared with the aforementioned concatenation-based prototype, the periodicity-aware prototype is less affected by traffic noise and thus contains more available client-specific knowledge.
\textit{Therefore, in FUELS, we adopt the periodicity-aware prototype as communication carrier, unless otherwise stated.  }

\subsubsection{\textbf{JSD-based Aggregation}}
For each client, the aggregation process in FUELS can align the homogeneous and heterogeneous semantic prototypes so as to help model the spatial homogeneity and heterogeneity with others in the auxiliary contrastive task.


The diversity of traffic data across clients results in the distribution difference over $R_{n}(1\le n\le N)$, which can be evaluated by JSD at the server. 
Let $JS(R_{n}||R_{m}) \in \mathbb{R}$ denote the JSD value between $R_{n}$ and $R_{m}$.
According to the calculation logic, if the heterogeneity between $R_n$ and $R_m$ is stronger, $JS(R_{n}||R_{m})$ will be higher.
Given a threshold $\beta$, if $JS(R_{n}||R_{m}) \le \beta$, the server considers client $n$ and $m$ share similar knowledge and have homogeneous traffic data.
Then, $R_m$ will be added into $\mathcal{PC}_n$, which denotes the set of positive prototypes for client $n$.
Otherwise, $R_m$ will be put into $\mathcal{NC}_n$, which denotes the set of negative prototypes for client $n$.
Finally, the server performs customized aggregation for client $n$ as
\begin{align}
P{R_{n}} &= \frac{1}{{\left| {{\mathcal{PC}_{n}}} \right|}}\sum\limits_{R_m \in {\mathcal{PC}_{n}}} {{R_{m}}} ,
\\
N{R_n} &= \frac{1}{{\left| {{\mathcal{NC}_n}} \right|}}\sum\limits_{R_m \in  {\mathcal{NC}_n}}^N {{R_m}},
\end{align}
where $PR_{n} \in \mathbb{R}^{B\times d_r}$ and $NR_{n} \in \mathbb{R}^{B\times d_r}$ represent the global positive and negative prototypes for client $n$. 

It is worth noting that the computation complexity of JSD values is $\mathcal{O}\left(\frac{{{N^2} - N }}{2}\right)$, which can hardly apply to forecasting tasks with masses of clients.
The server can randomly select several clients as participants each round, with the selection ratio is $\alpha$. Therefore, the JSD values of those clients which are not selected can be reused. 
The computation complexity can be reduced by $\mathcal{O}\left(\frac{{{N^2}{{(1 - \alpha )}^2} - N(1 - \alpha )}}{2}\right)$. The reduction enlarges significantly with $N$ increasing.

\subsubsection{\textbf{Inter-Client Contrastive Loss}}
After aggregation, the customized global positive and negative prototypes are distributed to the corresponding clients. 
Each client can further enforce the spatial discrimination of its local prediction model by approaching the positive prototype and keeping away from the negative prototype in the training process.

We define the inter-client contrastive loss item as
\begin{align}
{\mathcal{L}_{inter}} &= \frac{1}{B}\sum\limits_{b = 1}^B { - \log \frac{{los{s_{pos}}}}{{los{s_{pos}} + los{s_{neg}}}}}, \\
los{s_{pos}} &= \exp \left( {{\rm sim}\left( {r_{n,b},PR_{n,b}} \right)/\tau } \right),\\
los{s_{neg}} &= \exp \left( {{\rm sim}\left( {r_{n,b},NR_{n,b}} \right)/\tau } \right),
\end{align}

where $PR_{n,b}\in \mathbb{R}^{d_r}$ and $NR_{n,b}\in \mathbb{R}^{d_r}$ denote the $b$-th row of $PR_{n}$ and $NR_{n}$ respectively.
Under the constraint of $\mathcal{L}_{inter}$, the prediction model of the $n$-th client can be empowered with great spatial personalization.


\subsection{\textbf{Local Training and Inference}}
In the training process, client $n$ inputs the representation $r_n$ from the encoder $en$ to the decoder $de$ for generating the predicted values.
Then, it calculates the prediction loss as
\begin{equation}
    \mathcal{L}_{pred} \left(\delta_n ;\phi_n;\textbf{y}_n,\hat{\textbf{y}}_n  \right) = \frac{1}{B} \left\| \textbf{y}_n - \hat{\textbf{y}}_n \right\| ^2 .
\end{equation}
Therefore, the local loss function $\mathcal{L}$ in Eq. (1) is defined as a combination of $\mathcal{L}_{intra}$, $\mathcal{L}_{inter}$, and $\mathcal{L}_{pred}$, which is formulated as
\begin{align}
    \mathcal{L}&\left( { \delta_n,\phi_n, W_n ; \textbf{x}_n,\textbf{y}_n, \textbf{x}_n', PR_n,NR_n} \right) 
    \notag
    \\
    = &{\mathcal{L}_{pred}}\left(  \delta_n, \phi_n; \textbf{x}_n,\hat{\textbf{y}}_n \right)
    \notag
    \\
    &+ {\mathcal{L}_{ intra}}\left( { \delta_n, W_n; \textbf{x}_n , \textbf{x}_n'} \right) 
    \notag
    \\
    &+ \rho {\mathcal{L}_{ inter}}\left( { \delta_n; \textbf{x}_n, PR_n, NR_n} \right),
\end{align}
where $\rho$ denotes the additive weight of $\mathcal{L}_{inter}$. Then, client $n$ performs gradient descent to update local parameters. The training process of FUELS is elaborated in Appendix \ref{process}.
After the training process, the local encoders are of great personalization to generate spatio-temporal heterogeneous representations.
Therefore, in the inference process, each client just needs to input the representations into the decoder to obtain the prediction results.

\section{Algorithm Analysis}
\subsection{Generalization Analysis}
We provide insights into the generalization bound of FUELS. The detailed proof and derivations are presented in Appendix \ref{proof}.
For ease of notation, we use the shorthand $\mathcal{L}\left( w_n \right) := \mathcal{L}\left( { \delta_n,\phi_n, W_n ; \textbf{x}_n,\textbf{y}_n, \textbf{x}_n', PR_n,NR_n} \right) $. 
\begin{assumption}(\textit{Bounded Maximum})
The Loss function $\mathcal{L}(\cdot)$ has an upper bound, i.e., $\mathop {\max } \mathcal{L}(\cdot) \le {\rm C}, {\rm C} < \infty .$
\end{assumption}
\begin{theorem}(\textit{Generalization Bounded})
\label{theorem1}
Let $w_n^*, n\in [1, N]$ denote the optimal model parameters for client $n$ by FUELS. 
Denote the prediction model $f$ as a hypothesis from $\mathcal{F}$ and $d$ as the VC-dimension of $\mathcal{F}$. With the probability at least 1-$\kappa$:
\begin{align}
&\mathop {\max }\limits_{\left( {{w_1},\cdots,{w_N}} \right)} \left[ {\sum\limits_{n = 1}^N {\frac{{\left| {{\mathcal{D}_n}} \right|}}{D}} \mathcal{L}\left( {{w_n} } \right) - \sum\limits_{n = 1}^N {\frac{{\left| {{\mathcal{D}_n}} \right|}}{D}} \mathcal{L}\left( {w_n^*} \right)} \right]
\notag\\
&\le \sqrt {\frac{{2d}}{D}\log \frac{{eD}}{d}}  + \sqrt {\frac{{{{\rm C}^2}{D^2}}}{2}\log \frac{1}{\kappa }},
\end{align}
\end{theorem}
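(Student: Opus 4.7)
The plan is to carry out a classical PAC-style generalization analysis, decomposing the weighted excess risk into deviations between population and empirical losses plus a non-positive ERM-vs-competitor term. Let $\widehat{\mathcal{L}}(w_n):=\frac{1}{|\mathcal{D}_n|}\sum_{(\mathbf{x}_n^k,y_n^k)\in\mathcal{D}_n}\ell(f(w_n;\mathbf{x}_n^k),y_n^k)$ denote the empirical analogue of $\mathcal{L}(w_n)$. Adding and subtracting the empirical losses gives
\begin{align*}
\mathcal{L}(w_n)-\mathcal{L}(w_n^*) &= \bigl[\mathcal{L}(w_n)-\widehat{\mathcal{L}}(w_n)\bigr]\\
&\quad+\bigl[\widehat{\mathcal{L}}(w_n)-\widehat{\mathcal{L}}(w_n^*)\bigr]\\
&\quad+\bigl[\widehat{\mathcal{L}}(w_n^*)-\mathcal{L}(w_n^*)\bigr],
\end{align*}
and because $w_n^*$ is the minimizer of the FUELS objective in Eq.~(1), the middle bracket is non-positive. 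Taking the maximum over $(w_1,\ldots,w_N)$ and summing with the convex weights $|\mathcal{D}_n|/D$ therefore bounds the LHS by $2\sum_n \tfrac{|\mathcal{D}_n|}{D}\sup_{w_n}\bigl|\mathcal{L}(w_n)-\widehat{\mathcal{L}}(w_n)\bigr|$, so everything reduces to controlling a single uniform generalization gap.

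For the uniform gap I would run the standard symmetrization and ghost-sample argument and invoke Sauer--Shelah on the hypothesis class $\mathcal{F}$ of VC-dimension $d$, so that the growth function satisfies $\Pi_{\mathcal{F}}(D)\le (eD/d)^d$; applying this bound to the $D=\sum_n|\mathcal{D}_n|$ pooled samples yields the first summand $\sqrt{(2d/D)\log(eD/d)}$ as the expected deviation. The confidence-dependent summand should follow from Hoeffding's inequality applied to the centered empirical process, using Assumption~1 ($\mathcal{L}\le C$) to control the range of the per-sample terms, delivering the high-probability tail $\sqrt{\tfrac{C^2 D^2}{2}\log(1/\kappa)}$. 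A union bound over the two events (uniform deviation plus tail) and across the $N$ clients then produces the claimed inequality.

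The main obstacle is that $\mathcal{L}(w_n)$ is not a plain per-sample prediction loss: it superposes $\mathcal{L}_{pred}$ with the intra-client term $\mathcal{L}_{intra}$, which couples samples inside a batch through the similarity matrix $SM$ and the learnable filter $W_n$, and with the inter-client term $\mathcal{L}_{inter}$, whose global prototypes $PR_n, NR_n$ depend on every client's data via the JSD aggregation. Standard VC-type bounds presume i.i.d. per-sample losses, so I would circumvent this by (i) conditioning on the realized augmented batches $\mathbf{x}_n'$ and on the aggregated prototypes $PR_n, NR_n$, treating them as fixed side information during the deviation step, so that the contrastive terms become batch-level functionals of $\delta_n$ alone, and (ii) leaning on Assumption~1 to guarantee that the composite loss stays inside $[0,C]$ so that Hoeffding still applies unconditionally after integrating over the conditioning. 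The delicate point will be justifying that the effective hypothesis class induced by the contrastive augmentation has VC-dimension still bounded by $d$; a conservative route is to regard $W_n$ and the prototypes as inputs rather than parameters, leaving only $f(\cdot;w_n)\in\mathcal{F}$ to be measured by the VC complexity.
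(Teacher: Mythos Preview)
Your route departs from the paper's in both the concentration step and the overall framing. You start with the classical empirical/population decomposition and plan to drop the ERM term, then combine symmetrization plus Sauer--Shelah for the VC term with Hoeffding for the tail. The paper instead treats the entire weighted maximum as a single function $g(w_1,\ldots,w_N)$, shows directly from Assumption~4.1 that $|g(\ldots,w_n,\ldots)-g(\ldots,w_n',\ldots)|\le C|\mathcal{D}_n|^2/D$, applies McDiarmid's inequality to produce the $\sqrt{(C^2D^2/2)\log(1/\kappa)}$ term, and then bounds $\mathbb{E}[g]$ by identifying $\sup_{f\in\mathcal{F}}[\mathcal{L}(w_n)-\mathcal{L}(w_n^*)]$ with a per-client Rademacher complexity, invoking Sauer's Lemma, and closing with Jensen's inequality across the convex weights $|\mathcal{D}_n|/D$. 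None of the contrastive-loss subtleties you raise (batch coupling through $SM$ and $W_n$, prototype dependence across clients) are confronted in the paper's argument; it proceeds purely at the level of an abstract bounded loss.

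There is, however, a genuine sign error in your decomposition that breaks the argument. You assert that the middle bracket $\widehat{\mathcal{L}}(w_n)-\widehat{\mathcal{L}}(w_n^*)$ is non-positive because $w_n^*$ minimizes the FUELS objective in Eq.~(1). But Eq.~(1) is the \emph{empirical} objective, so $w_n^*$ being its minimizer yields $\widehat{\mathcal{L}}(w_n^*)\le\widehat{\mathcal{L}}(w_n)$ for every competitor $w_n$, i.e.\ the middle bracket is \emph{non-negative}. The standard three-term trick with a dropable middle term runs in the opposite direction---bounding $\mathcal{L}(\hat w)-\mathcal{L}(w^{\mathrm{pop}})$ for an ERM $\hat w$ against a population optimum---whereas here the theorem maximizes over arbitrary $w_n$ and subtracts the loss at the learned $w_n^*$. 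As written, then, your decomposition cannot control the left-hand side; you would need either to reinterpret what the two occurrences of $\mathcal{L}$ denote (the paper itself tacitly equates $\sup_{w_n}[\mathcal{L}(w_n)-\mathcal{L}(w_n^*)]$ with an empirical-vs-population deviation when it matches it to the Rademacher complexity) or to switch to the bounded-differences/McDiarmid route the paper actually takes.
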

where $e$ denotes the Euler's number. Theorem \ref{theorem1} indicates that the performance gap between FUELS and the optimal parameters is related to the VC-dimension of $\mathcal{F}$, which can be narrowed by carefully-selected prediction networks.

\subsection{Convergence Analysis}
\begin{assumption}(\textit{Bounded Expectation of Gradients})
\label{G}
The expectation of gradient of loss function $\mathcal{L}(\cdot)$ is uniformly bounded, i.e., ${\rm \mathbb{E}}(||\nabla {\mathcal{L}}( \cdot )|{|}) \le {G}.$
\end{assumption}
\begin{assumption}(\textit{Lipschitz Smooth})
\label{l-smooth}
The loss function $\mathcal{L}$ is $L_1$-smooth, i.e., $\mathcal{L}(w) - \mathcal{L}(w') \le \left\langle {\nabla \mathcal{L}(w'), w-w'} \right\rangle  + L_1||w-w'|{|^2}, \forall w, w', \exists L_1 > 0.$
\end{assumption}
\begin{assumption}(\textit{Lipschitz Continuity})
\label{continuous}
Suppose $h:{\mathcal{A}_1} \times$
${\mathcal{A}_2} \times  \cdots  \to {\mathcal{A}_T}$ is Lipschitz Continuous in $\mathcal{A}_j$, i.e., $\exists L_h, \forall$
$a_j, \hat{a}_j \in \mathcal{A}_j,  \left\| {h({a_1}, \cdots ,{a_j}, \cdots ) - h({a_1}, \cdots ,{{\hat a}_j}, \cdots )} \right\| \le {L_h}\left\| {{a_j} - {{\hat a}_j}} \right\|$.
\end{assumption}
Let $w_{n,t}^i$ denote the local model parameters of client $n$ at the $i$-th iteration in the $t$-th FL round, where $0\le t\le T-1$ and $0\le i\le I-1$. $T$ and $I$ denote the total number of FL rounds and local iterations respectively.
\begin{theorem}(\textit{Convergence Rate})
\label{convergence}
    Let Assumption \ref{G} to \ref{continuous} hold. Let $\mathcal{L}\left( {w_{n,0}^0} \right) - \mathcal{L}\left( {w_n^*} \right) = \Lambda $.
    If clients adopt stochastic gradient descent method to optimize local prediction models with the learning rate equal to $\eta$, for any client, given $\xi >0$, after
    \begin{equation}
        \label{T}
        T = \frac{\Lambda }{{\xi I(\eta  - {L_1}{\eta ^2}) - \rho \eta {L_h}N\alpha {I^2}G}}
    \end{equation}
    FL rounds, we can obtain 
    \begin{equation}
        \frac{1}{{TI}}\sum\limits_{t = 0}^{T - 1} {\sum\limits_{i = 0}^{I - 1} {\mathbb{E}\left[ {{{\left\| {\nabla \mathcal{L}\left( {w_{n,t}^i} \right)} \right\|}^2}} \right]} }  < \xi 
    \end{equation}
    with 
    \begin{equation}
        \label{eta}
        \eta < \frac{{\xi  - \rho {L_h}N\alpha IG}}{{{L_1}\xi }}.
    \end{equation}
\end{theorem}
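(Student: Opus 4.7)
The plan is to carry out a descent-lemma argument along the telescoping sum $\mathcal{L}(w_{n,T}^0) - \mathcal{L}(w_{n,0}^0)$, treating each FL round as a block of $I$ SGD steps on a loss function that itself depends on the current round's prototypes $PR_n^t, NR_n^t$. The technical wrinkle is that this loss function changes between rounds because the server recomputes the aggregated prototypes from the other clients' updated encoders. I would therefore split each per-round increment into an ``SGD-within-round'' piece, in which the prototypes are frozen, and a ``prototype-drift-between-rounds'' piece, in which $w$ is frozen while $(PR_n, NR_n)$ updates.

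For the within-round piece, I would fix $\mathcal{L}^t(\cdot) := \mathcal{L}(\cdot ; PR_n^t, NR_n^t)$ and apply Assumption~\ref{l-smooth} to the SGD update $w_{n,t}^{i+1} = w_{n,t}^i - \eta g_{n,t}^i$. Taking expectation over the stochastic gradient yields the standard inequality $\mathbb{E}[\mathcal{L}^t(w_{n,t}^{i+1})] \le \mathbb{E}[\mathcal{L}^t(w_{n,t}^i)] - (\eta - L_1 \eta^2)\,\mathbb{E}[\|\nabla \mathcal{L}^t(w_{n,t}^i)\|^2]$, whose sum over $i = 0,\dots,I-1$ produces exactly the gradient-squared terms that should appear on the left of the target bound.

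For the between-round piece, I would invoke Assumption~\ref{continuous} applied to $\mathcal{L}_{inter}$ as a function of $(PR_n, NR_n)$, giving $|\mathcal{L}^{t+1}(w) - \mathcal{L}^t(w)| \le \rho L_h (\|PR_n^{t+1} - PR_n^t\| + \|NR_n^{t+1} - NR_n^t\|)$. The prototype drift itself is then propagated back through the AVG aggregation: each participating client's encoder parameters shift by at most $\eta I G$ per round (using Assumption~\ref{G} to bound the expected norm of $I$ stacked updates), and summing this drift over the $N\alpha$ sampled clients in the aggregation sets produces the per-round bound. Telescoping both pieces over $T$ rounds, and using $\mathcal{L}(w_{n,T}^0) - \mathcal{L}(w_n^*) \ge 0$, should yield an inequality of the schematic form $(\eta - L_1\eta^2)\sum_{t,i}\mathbb{E}\|\nabla\mathcal{L}(w_{n,t}^i)\|^2 \le \Lambda + T\rho L_h N\alpha\,\eta I^2 G$. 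Dividing by $TI$, imposing that this average be at most $\xi$, and solving for $T$ then recovers Eq.~(\ref{T}); demanding that the denominator of $T$ be positive gives the step-size condition Eq.~(\ref{eta}) by elementary algebra.

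The main obstacle I anticipate is producing a clean prototype-drift bound. Two subtleties need care: (i) the JSD-based partitions $\mathcal{PC}_n$ and $\mathcal{NC}_n$ themselves depend on the drifting prototypes through the fixed threshold $\beta$, so a priori their membership can flip between rounds and inject discontinuities into the drift estimate, which likely requires either absorbing the effect into an effective $L_h$ or assuming stability of the partition; and (ii) obtaining the correct power of $I$ on the drift term is delicate, since one can account for the per-round drift either once per round (giving $I^1$) or once per SGD iteration inside the round (giving $I^2$), and only the latter accounting is consistent with the denominator in Eq.~(\ref{T}). Aligning the bookkeeping to produce the $I^2$ scaling is where the derivation is most likely to slip.
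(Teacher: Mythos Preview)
Your proposal is correct and follows essentially the same skeleton as the paper: a per-iteration descent inequality from $L_1$-smoothness telescoped over the $I$ inner steps, a separate ``prototype-drift'' term accounting for the change in $\mathcal{L}_{inter}$ between rounds, and a final telescoping over $T$ rounds against $\Lambda$.

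The one place where the paper's bookkeeping differs from your outline is precisely your obstacle (ii). Rather than invoking Lipschitz continuity of $\mathcal{L}_{inter}$ in the two aggregated prototypes $(PR_n,NR_n)$ and then chaining through the averaging, the paper defines $h$ to be $\mathcal{L}_{inter}$ viewed directly as a function of the $NI$ encoder iterates $\{\delta_{m,t-1}^i\}_{m,i}$ and applies Assumption~\ref{continuous} coordinate-wise in each of these $NI$ arguments. This immediately produces the double sum $\sum_{m\in\mathcal{SC}_t}\sum_{i=0}^{I-1}\|\delta_{m,t}^i-\delta_{m,t-1}^i\|$, contributing one factor of $I$; each individual shift $\|\delta_{m,t}^i-\delta_{m,t-1}^i\|$ is then written as a chain of $I$ SGD increments and bounded in expectation by $\eta I G$ via Assumption~\ref{G}, contributing the second factor. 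That is how the $I^2$ appears. Your concern (i) about the JSD partition flipping is simply absorbed into the Lipschitz hypothesis on $h$; the paper does not treat it separately.
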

Based on the above commonly-used assumptions in FL works \cite{kairouz2021advances}, Theorem \ref{convergence} (the detailed proof is presented in Appendix \ref{proof2}) provides the convergence rate of FUELS. 
By adopting the learning rate $\eta$ computed via Eq. (\ref{eta}), after $T$ FL rounds (calculating $T$ via Eq. (\ref{T})), the expectation of model updates will not exceed the given arbitrary value $\xi$.


\begin{table*}[htbp]
    \centering
  \caption{Performance comparison of different methods on three benchmark datasets. 
  ``\# of Comm Params'' refers to the averaged number of parameters each client sends to the server per round. ``Average'' represents the averaging performance on three datasets.
  }
  \vskip 0.1in
    \begin{tabular}{cccccccccc}
    \toprule
    Dataset & \multicolumn{2}{c}{SMS} & \multicolumn{2}{c}{Call} & \multicolumn{2}{c}{Net} & \multicolumn{2}{c}{Average} & \multicolumn{1}{c}{\multirow{2}[4]{*}[+1.5ex]{\makecell[c]{\# of Comm \\ Params} }} \\
    Metric & MSE   & MAE   & MSE   & MAE   & MSE   & MAE   & MSE   & MAE   &  \\
    \midrule
    Solo  & 1.884  & 0.604  & 0.361  & 0.294  & 2.423  & 0.654  & 1.556  & 0.517  & - \\
    FedAvg & 1.452  & 0.533  & 0.393  & 0.300  & 2.649  & 0.638  & 1.498  & 0.490  & 100737 \\
    FedProx & 1.495  & 0.542  & 0.394  & 0.300  & 2.528  & 0.629  & 1.472  & 0.490  & 100737 \\
    FedRep & 1.551  & 0.557  & 0.372  & 0.299  & 2.288  & 0.625  & 1.404  & 0.494  & 100737 \\
    PerFedAvg & 1.253  & 0.553  & 0.392  & 0.336  & 1.107  & 0.543  & 0.917  & 0.478  & 100737 \\
    pFedMe & 1.250  & 0.549  & 0.409  & 0.335  & 1.184  & 0.546  & 0.948  & 0.476  & 100737 \\
    FedDA & 1.937  & 0.699  & 1.055  & 0.430  & 3.589  & 0.849  & 2.193  & 0.659  & 100737 \\
    \rowcolor{blue!40} FUELS & 1.249  & 0.541  & 0.353  & 0.311  & 0.880  & 0.488  & 0.827  & 0.446  & 6144 \\
    \bottomrule
    \end{tabular}%
  \label{performance}
\vskip -0.1in
\end{table*}%

\subsection{Communication and Computation Complexity}
The number of parameters each client uploads is $\mathcal{O}(B{d_r})$, which is much smaller than the number of model parameters (detailed numerical results in Table \ref{performance}).
The computation cost at client in training FL round can be summarized as $\mathcal{O}(I(3FE+2FF+2FD))$, where $FE$, $FF$, and $FD$ represent the computation cost of encoder $en$, $W_n$, and decoder $de$ respectively for a batch in the forward propagation. 
Given the light weight of $W_n$, most of additional computation cost results from encoding the augmented data.
In Appendix \ref{analysis}, we further provide detailed comparison of computation and communication complexity among FUELS and state-of-the-art methods.

\section{Experiments}
In this section, we conduct extensive experiments to validate the effectiveness and efficiency of our proposed FUELS to answer the following research questions: \textbf{RQ1:} Can FUELS achieve dominant prediction performance with the baselines in a communication-efficient manner? \textbf{RQ2:} Do the periodicity-aware prototypes outperform the concatenation-based prototypes? \textbf{RQ3:} How do different loss items in FUELS affect the prediction performance? 
\textbf{RQ4:} Does the dynamic hard negative filtering really work?
\textbf{RQ5:} How do hyperparameters affect prediction performance?

\subsection{Experimental Settings}
\label{perfom-set}
We evaluate our proposed method on the following three benchmark datasets, i.e., short message services (SMS), voice calls (Call), and Internet services (Net) \cite{barlacchi2015multi}. 
The performance of FUELS is compared with 6 popular FL methods, including FedAvg \cite{mcmahan2017communication}, FedProx \cite{li2020federated}, FedRep \cite{collins2021exploiting}, PerFedAvg \cite{fallah2020personalized}, pFedMe \cite{DBLP:conf/nips/DinhTN20}, FedDA \cite{zhang2021dual}, and Solo. 
We adopt two widely used metrics to evaluate the prediction performance, i.e, Mean Squared Error (MSE) and Mean Absolute Error (MAE).
Due to space limitation, detailed introduction of experimental settings is presented in Appendix \ref{exprimnt}.

\subsection{Performance Comparison (RQ1)}
\label{rq1}

The evaluation results of the baselines and our proposed method on three datasets are presented in Table \ref{performance}.
We have the key observations that FUELS shows promising performance by consistently outperforming all the baselines on the three datasets. 
Moreover, the communication cost of clients in FUELS is significantly lower than that in the baselines, with approximate 94\% reduction in communication parameters per client at each round.
We attribute the superiority to the designed contrastive tasks and novel communication carrier.

\textbf{Effectiveness.} Figure \ref{cdfs} shows the predicted values of four methods and the cumulative distribution functions (CDFs) of MSEs over all clients.
We observe that the predicted values of FedRep, PerFedAvg, pFedMe and FUELS have similar prediction performance at the smooth traffic sequences.
However, FUELS can generate more accurate results in fluctuating traffic sequences on all three datasets.
The area under the CDF curve of FUELS is larger than those of the other three methods, which indicates that clients' prediction MSEs in FUELS distribute around lower values. Taking the Net dataset for example, 87\% of clients have prediction MSEs lower than 1.5, while the cases for FedRep, pFedMe and PerFedAvg are 72\%, 80\% and 81\% respectively.

\textbf{Efficiency.} Figure \ref{mse-com} further shows the training MSE versus the communication amounts over all clients.
We observe that given the same MSE, the communication amounts in pFedMe and FedRep are significantly higher than those in FUELS on both datasets.
FUELS can yield superior prediction performance with the baselines and simultaneously reduce the communication cost to a great extent, which indicates the efficiency of FUELS.

\begin{figure}[!t]
\vskip 0.1in
    \centering
    \includegraphics[width=\linewidth]{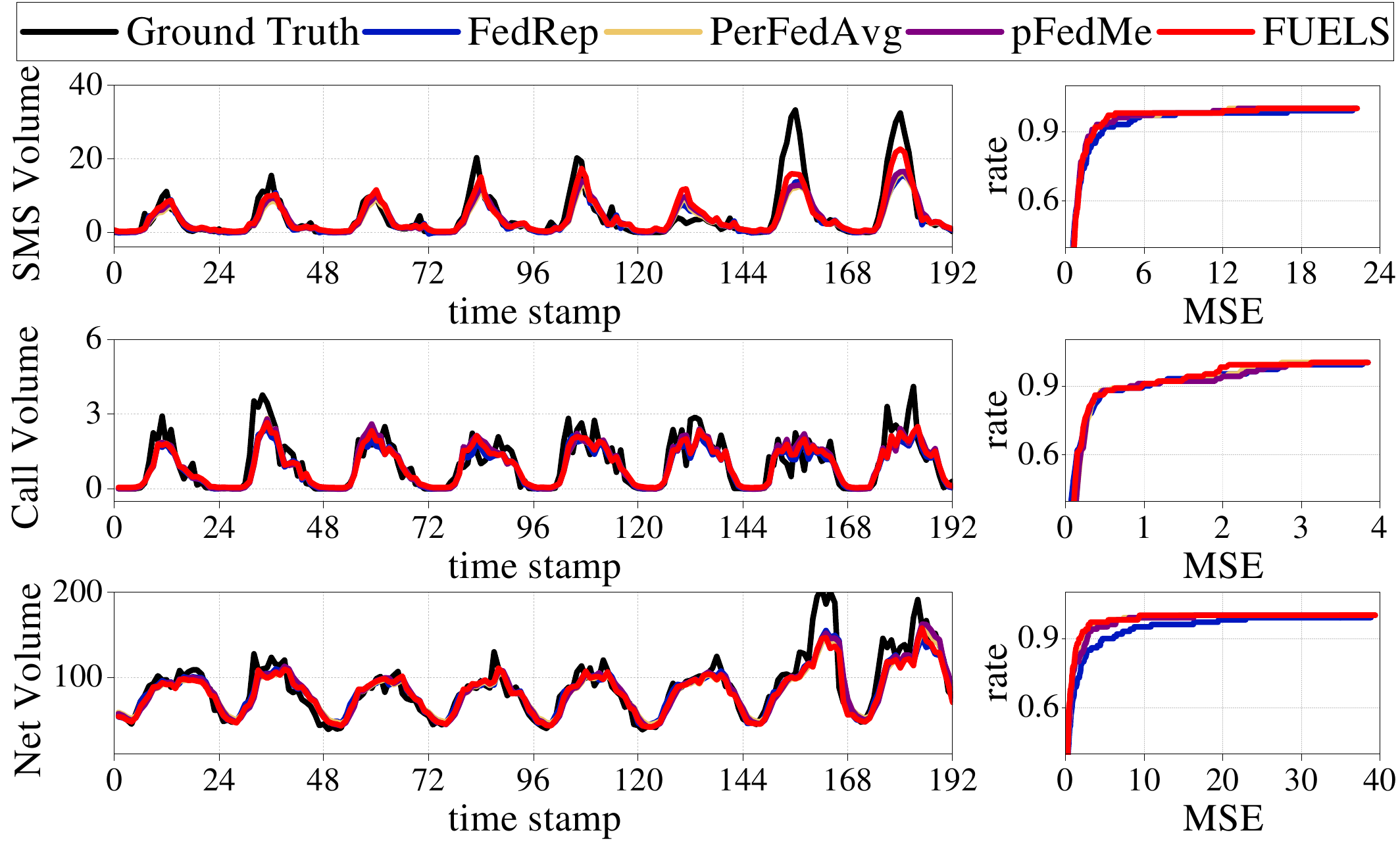}
    \caption{Comparison of different methods in terms of prediction values and CDFs of MSE.}
    \label{cdfs}
    \vskip -0.1in
\end{figure} 
\begin{figure}[ht]
\vskip 0.1in
    \centering
    \includegraphics[width=\linewidth]{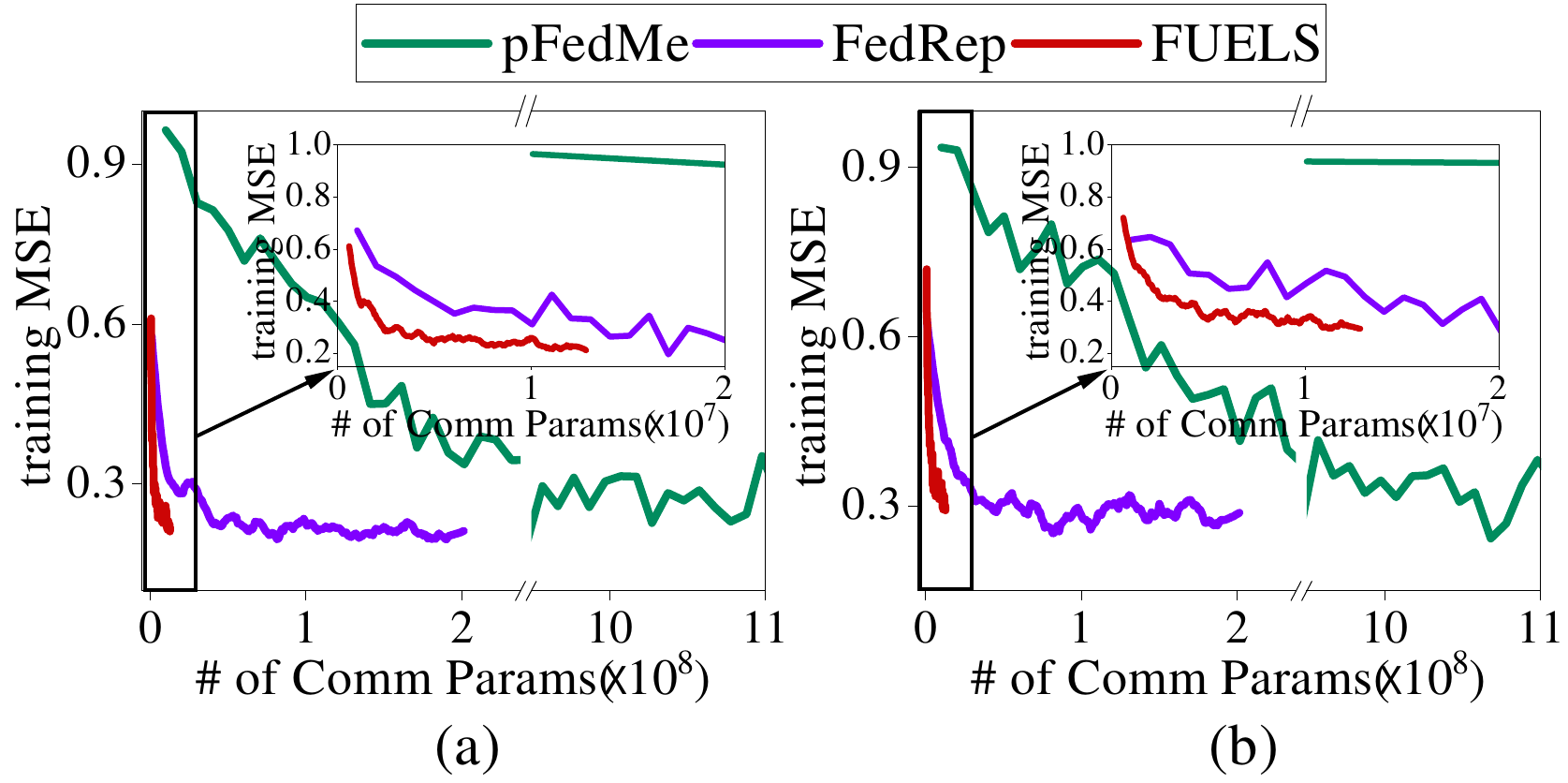}
    \caption{Training MSE versus communication amounts on (a) SMS and (b) Net datasets. }
    \label{mse-com}
    \vskip -0.1in
\end{figure}

\subsection{Ablation Study (RQ2, RQ3 and RQ4)}
we compare FUELS with the following 4 variants. \textbf{(1)} \textbf{w/o inter} \textbf{(for RQ3)}: Only the intra-client contrastive loss item is adopted; \textbf{(2)} \textbf{w/o intra} \textbf{(for RQ3)}: Only the inter-client contrastive loss item is adopted; \textbf{(3)} \textbf{w/o p-aware} \textbf{(for RQ2)}: Concatenation-based prototypes are adopted instead of {\underline p}eriodicity-{\underline {aware}} prototypes; 
\textbf{(4)} \textbf{w/o \textit{W}} \textbf{(for RQ4)}: Dynamic hard negative filtering is omitted. The prediction performance of FUELS and the above variants on Net dataset is shown in Figure \ref{ablation}. Full results are presented in Table \ref{ab_full}.


We have the following key observations.
\textbf{(1)} The increased prediction errors of w/o inter and w/o intra compared with FUELS indicate that inter- and intra-client contrastive loss items can benefit the local training from different perspectives. 
\textbf{(2)} With concatenation-based prototypes, the prediction performance may be impacted by the noise in representations and ``\# of Comm Params'' will significantly increase to 227328.
\textbf{(3)} The higher error in w/o $W$ indicates that the proposed dynamic filtering module can effectively filter out hard negatives and keep the semantic structure consistent.

\begin{figure}[!ht]
\vskip 0.1in
    \centering
    \includegraphics[width=.9\linewidth]{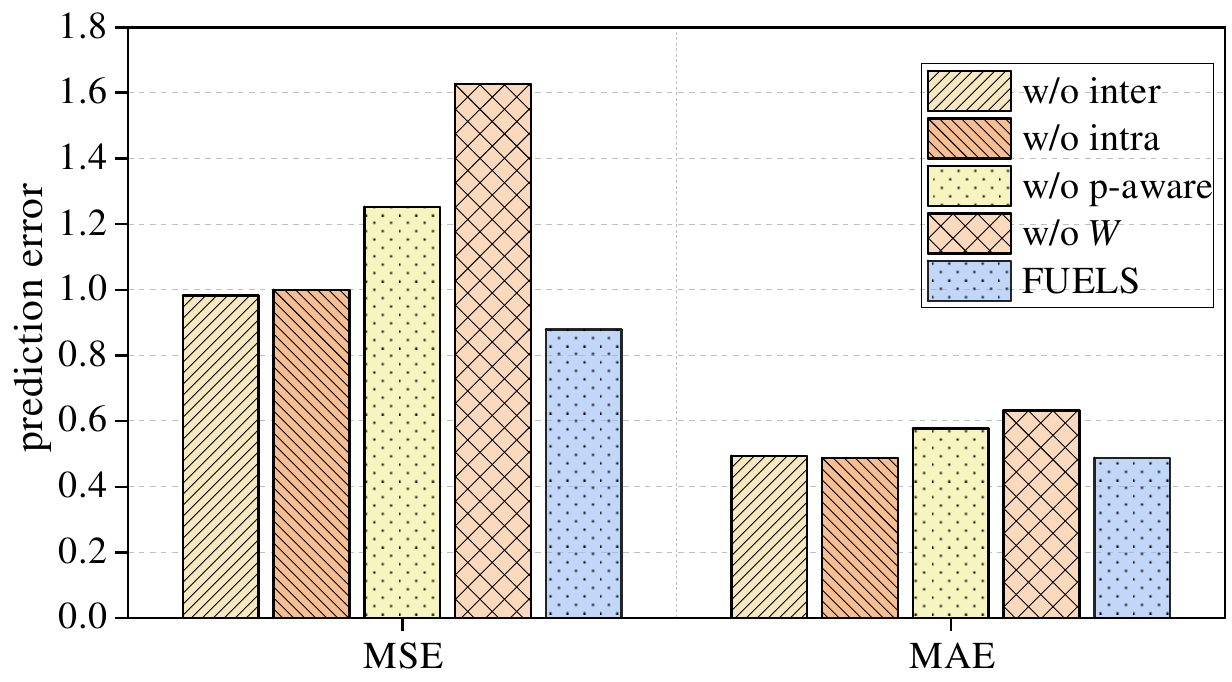}
    \caption{Performance comparison of FUELS and four variants on Net dataset.}
    \label{ablation}
    \vskip -0.1in
\end{figure}

\subsection{Hyperparameter Investigation (RQ5)}
\textbf{We investigate the effects of different settings of $\beta$, $\tau$, and $\rho$ on prediction performance}, which is shown in Figure \ref{param}.
\textbf{(1)} $\beta$ affects how the server divides the positive and negative sets for clients. Too larger or lower values of $\beta$ will make the server fail to strike the balance between guaranteeing the spatial heterogeneity and sharing similar knowledge. When $\beta$ is set as the 50-th percentile of all JSD values, FUELS achieves the best performance.
\textbf{(2)} A lower value of $\tau$ will make the contrastive loss pay more attention to punishing the negative samples \cite{wang2021understanding}.
In general, MSE gets higher with the increase of $\tau$ and we finally set $\tau$ as 0.02.
\textbf{(3)} Different settings of $\rho$ determine how deep clients pay attention to the spatial heterogeneity. 
To keep a balance between the temporal and spatial heterogeneity, $\rho$ should be neither too high nor to low. From Figure \ref{param}(c), the best setting of $\rho$ is 5.
\begin{figure}[!ht]
\vskip 0.1in
    \centering
    \includegraphics[width=0.9\linewidth]{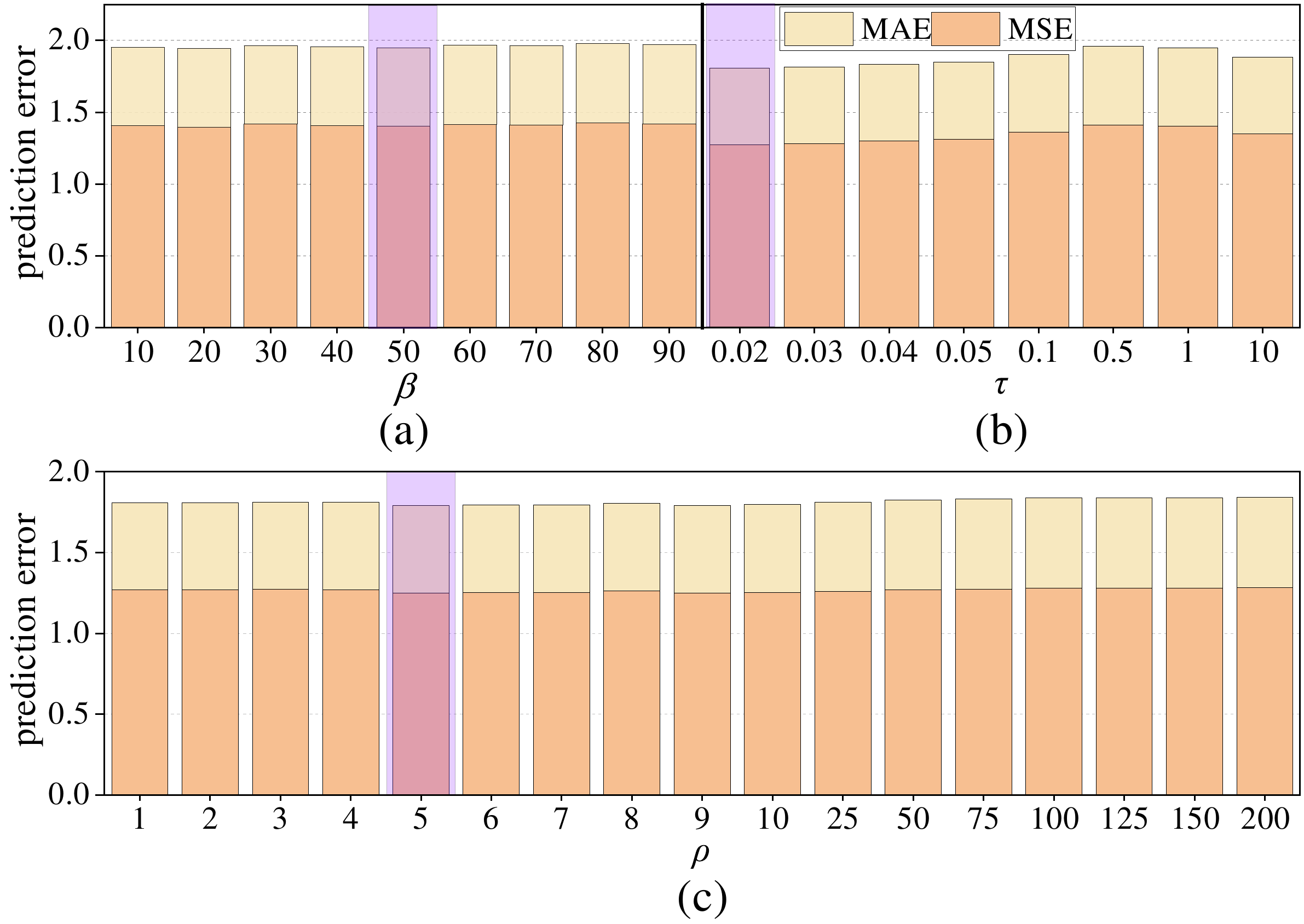}
    \caption{Different settings of (a) $\beta$, (b) $\tau$, and (c) $\rho$ versus prediction errors on  SMS dataset.}
    \label{param}
\vskip -0.1in
\end{figure}
\subsection{Case Study}
\label{case}
\textbf{Relatedness Between Local Prototype and Traffic Data.}
We randomly select several clients to visualize the correlation between prototypes and local datasets. The results are shown in Figure \ref{js}(a) and (b).
We calculate the similarity of local training datasets based on JSD and set the threshold as $0.00065$, which is also the 50-th percentile of these JSD values. We observe that clients' correlation illustrated in Figure \ref{js}(a) is the same as the JSD-based results in Figure \ref{js}(b). Therefore, we claim that the designed local prototypes can effectively express client-specific knowledge.

\textbf{Visualization of Dynamic Hard Negative Filtering.} The illustration of parameters in $W_n$ for a randomly selected client is presented in Figure \ref{filter}. 
If a parameter in $W_n$ is over 0, the corresponding value in $Z$ will be over 0, otherwise equal to 0. 
Almost all the values on the diagonal are less than 0, which represents that positives are filtered out.
Furthermore, the filtering approach can filter out hard negatives within or without closeness scope and simultaneously avoid the error filtering of true negatives within closeness range.

\subsection{Combination with Privacy Mechanisms }
We incorporate FUELS with privacy-preserving mechanisms in case of the privacy leakage in the communicated prototypes.
Similar to differential privacy, \textbf{we add different types of random noise with $Laplace$, $Guassian$, and $exponential$ distribution to the local prototypes in FUELS.} 
The results indicate that there is no significant performance degradation in the performance of FUELS after noise injection.
Details are presented in \ref{sec_privacy}.

\subsection{Additional Experimental Results in Appendix}
Due to space limitation, we provide auxiliary experimental results in Appendix, including \textbf{effect of prototype size} (in Appendix \ref{append-proto}) and \textbf{performance on extra dataset} (Appendix \ref{other}).

\section{Related Work}
\subsection{Federated Learning for Spatio-Temporal Forecasting}
Due to the effectiveness of FL, there have been many works focusing on incorporating FL into spatio-temporal forecasting tasks, e.g., wireless traffic prediction \cite{zhang2021dual,perifanis2023federated,zhang2022efficient}.
These FL methods focus on how to evaluate the decoupled spatial and temporal correlation at the server and at the clients respectively by split learning \cite{meng2021cross}, clustering \cite{liu2020privacy,zhang2021dual}, online learning \cite{liu2023online}, and so on.
However, all of these methods ignore the spatio-temporal heterogeneity and yield prediction results for different time stamps or different clients with the shared parameter space.


\subsection{Personalized Federated Learning}
Personalized federated learning (PFL) tackles with the problem of inference performance decline aroused by statistics heterogeneity across clients.
The existing PFL methods can be divided into 2 categories, i.e., clients training a global model or training personalized local models.
Methods in the first category aim to increase the generalization of the global model by designing client-selection strategy \cite{yang2021federated,li2021fedsae} or adding proximal item to original functions  
\cite{karimireddy2020scaffold,li2021model} 
In the second category, researchers modify the conventional FL procedure by network splitting \cite{arivazhagan2019federated,bui2019federated,liang2020think}, multitask learning \cite{smith2017federated,ghosh2020efficient,xie2021multi}, knowledge distillation \cite{li2019fedmd,zhu2021data,lin2020ensemble} and so on.
However, all of these works fail to solve the decoupled spatio-temporal heterogeneity.

\subsection{Contrastive Learning}
In contrastive learning, embeddings from similar samples are pulled closer and those from different ones are pushed away by constraining the loss function \cite{chen2020simple,he2020momentum}. 
Due to its effectiveness, contrastive learning has been applied to many scenarios, i.e., graph learning \cite{li2022mining,you2020graph,zhu2021graph}, traffic flow prediction \cite{ji2023spatio,yue2022ts2vec,woo2022cost}, \emph{etc}.
Furthermore, some researches have focused on introducing contrastive learning into PFL paradigm mainly for handling statistics heterogeneity \cite{li2021model,tan2022federated,yu2022multimodal,mu2023fedproc}.
However, all of these methods focus on improving the performance on image classification. How to tackle with the spatio-temporal heterogeneity PFL paradigm by contrastive learning remains an open problem.

\section{Conclusion and Future Work}
We propose FUELS for tackling the spatio-temporal heterogeneity. Specifically, we adaptively align the temporal and spatial representations according to semantic similarity for the supplemented intra- and inter-client contrastive tasks to preserve the spatio-temporal heterogeneity in the latent representation space.
Note-worthily, a lightweight but efficient prototype is designed as the client-level representation for carrying client-specific knowledge.
Experimental results demonstrate the effectiveness and efficiency of FUELS.

Due to the ubiquity of spatio-temporal heterogeneity, FUELS may be also applicable to other spatio-temporal tasks.
Since the communication carrier is independent of network structures, FUELS can be built over heterogeneous local prediction models, pre-trained models, or even Large Language Models, which will be explored in future studies.

\section*{Impact Statements}
This paper presents work whose goal is to advance the field of Machine Learning. There are many potential societal consequences of our work, none which we feel must be specifically highlighted here.

\nocite{langley00}

\bibliography{example_paper}
\bibliographystyle{icml2024}

\newpage
\appendix
\onecolumn
\section{Notations}
\begin{table}[htbp]
  \centering
  \caption{Main notations and descriptions in Section 2 and 3.}
  \vskip 0.1in
    \begin{tabular}{cc}
    \toprule
    Notation & Description \\
    \midrule
    $n$ & Client index\\
    $N$ & Total number of clients\\
    $\mathcal{D}_n$ & Training dataset of client $n$ \\
    $D = \sum\nolimits_{n = 1}^N {\left| {{\mathcal{D}_n}} \right|} $ & Total number of training samples of $N$ clients\\
    $(\textbf{x}_n, \textbf{y}_n)$ & A batch of training samples of client $n$\\
    $\delta_n = (\delta_n^c, \delta_n^p)$ & Encoder's parameters of client $n$\\
    $\phi_n$ & Decoder's parameters of client $n$\\
    $W_n$ & Filtering matrix of client $n$\\
    $w_n=(\delta_n, \phi_n, W_n)$ & Local parameters of client $n$\\
    $B$ & Local batch size\\
    $r_n$ & Representation of $\textbf{x}_n$\\
    $d_r$ & Dimension of representation\\
    $R_n$ & Local prototype of client $n$\\
    $\mathcal{PC}_n, \mathcal{NC}_n$ & Positive and negative prototype set of client $n$\\
    $PR_n, NR_n$ & Global positive and negative prototype of client $n$\\
    $\beta$ & JSD threshold\\
    $\alpha$ & Selection ratio\\
    $\tau$ & Temperature factor\\
    $\rho$ & Additive weight of $\mathcal{L}_{inter}$\\
    $\mathcal{L}(\cdot)$ & Loss function\\
    $\mathcal{L}_{pred}, \mathcal{L}_{intra}, \mathcal{L}_{inter}$ & Loss items for prediction, intra-client contrastive task, and inter-client contrastive task\\
    \bottomrule
    \end{tabular}%
  \label{notations}%
  \vskip -0.1in
\end{table}%

\section{Detailed Procedure of FUELS}
\label{process}
The training process of FUELS is presented in Algorithm \ref{ag1}. We elaborate the procedure as follows.
\begin{itemize}[leftmargin=*]
    \item \textbf{Initialization:} At the initial round, all the client participating in FUELS should provide their local prototypes. Similarly, for the clients who want to join during the training stage, they should also provide local prototypes firstly.
    \item \textbf{Aggregation:} After the server receiving local prototypes from participating clients, it will update $\mathcal{R}$, perform JSD-based aggregation to generate client-customized global prototypes for the next round.
    \item \textbf{Local Training:} Each selected client calculates the representations of the raw and augmented data, computes three loss items gradually and then optimizes the local prediction model. After all batches, it yields the local prototype and uploads to the server.
\end{itemize}

\begin{algorithm}[!ht]
	\caption{Traning process of FUELS}
	\label{ag1}
	\LinesNumbered
	\KwIn{$\mathcal{D}_n$, $\mathcal{D}'_n$, $n=1,\cdots,N$, $\rho$, $\beta$, $\tau$, $\alpha$.}
	\SetKwFunction{Fexecute}{ClientExecute}
	
	\textsc{\textbf{ServerExecute:}}
 
         Initialize $\left\{ {P{R_n}} \right\}_{n = 1}^N$ and $\left\{ {N{R_n}} \right\}_{n = 1}^N$.

         Initialize the set of local prototypes $\mathcal{R} = \emptyset$.
         
	\For{$t = 1, 2, \cdots, T$}{
            
		\If{$t=1$}{
                \For {each client $n$ \textbf{in parallel}}{
                    $R_n \gets$\Fexecute$(n, PR_n, NR_n)$
                    
                    $\mathcal{R} \gets \mathcal{R} + \{R_n\}$
                
		      }
            }
            \Else{
                Randomly select $N\alpha$ clients.
                
                \For {each selected client $n$ \textbf{in parallel}}{
                    $R_n \gets$\Fexecute$(n, PR_n, NR_n)$

                    Update $R_n$ in $\mathcal{R}$.   
		      }  
            }
            
            Yield $\mathcal{PC}_n$ and $\mathcal{RC}_n, n\in [N]$ based on JSD values.

            $P{R_n} = \frac{1}{{\left| {{\cal P}{{\cal C}_n}} \right|}}\sum\nolimits_{{R_m} \in {\cal P}{{\cal C}_n}} {{R_m}} $, $N{R_n} = \frac{1}{{\left| {{\cal N}{{\cal C}_n}} \right|}}\sum\nolimits_{{R_m} \in {\cal N}{{\cal C}_n}} {{R_m}} \forall n \in [N].$
	}
	\SetKwProg{Fn}{Function}{:}{}
	\Fn{\Fexecute{$n$, $PR_n$, $NR_n$}}{
            \For {each epoch}{
                Initialize the set of representations $\mathcal{R}_n=\emptyset$.
                
                \For{$(\textbf{x}_n,\textbf{y}_n)$ in $\mathcal{D}_n$, $(\textbf{x}'_n,\textbf{y}'_n)$ in $\mathcal{D}'_n$ }{
                    $r_n \gets en(\delta_n;\textbf{x}_n), r'_n\gets en(\delta_n;\textbf{x}'_n)$

                    $\mathcal{R}_n \gets \mathcal{R}_n + \{r_n\}$
                    
                    Calculate $\mathcal{L}_{intra}$, $\mathcal{L}_{inter}$, and $\mathcal{L}_{pred}$ via Eq. (6), (11), and (14).
                    
                    $\mathcal{L} \gets \mathcal{L}_{intra}+ \mathcal{L}_{inter} + \rho \mathcal{L}_{pred} $
                    
                    Update $\delta_n$ and $\phi_n$ via gradient descent.
                }
                $R_n \gets \textbf{AVG}(\mathcal{R}_n)$.
            }
            \KwRet $R_n$
	}
\end{algorithm}

\section{Analysis}
\subsection{Proof of Generalzation Bound}
\label{proof}
\begin{assumption}(\textit{Bounded Maximum})
\label{maximum}
The Loss function $\mathcal{L}(\cdot)$ has an upper bound, i.e., $\mathop {\max } \mathcal{L}(\cdot) \le {\rm C}, {\rm C} < \infty .$
\end{assumption}
\begin{theorem}(\textit{Generalization Bounded})
\label{theorem}
Let $w_n^*, n\in [1, N]$ denote the optimal model parameters for client $n$ by Algorithm \ref{ag1}. 
Denote the prediction model $f$ as a hypothesis from $\mathcal{F}$ and $d$ as the VC-dimension of $\mathcal{F}$. With the probability at least 1-$\kappa$:
\begin{align}
&\mathop {\max }\limits_{\left( {{w_1},\cdots,{w_N}} \right)} \left[ {\sum\limits_{n = 1}^N {\frac{{\left| {{\mathcal{D}_n}} \right|}}{D}} \mathcal{L}\left( {{w_n} } \right) - \sum\limits_{n = 1}^N {\frac{{\left| {{\mathcal{D}_n}} \right|}}{D}} \mathcal{L}\left( {w_n^*} \right)} \right]
\notag\\
&\le \sqrt {\frac{{2d}}{D}\log \frac{{eD}}{d}}  + \sqrt {\frac{{{{\rm C}^2}{D^2}}}{2}\log \frac{1}{\kappa }}\notag,
\end{align}
\end{theorem}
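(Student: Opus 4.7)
The plan is to control the left-hand side by decomposing it into two deviation terms that can be bounded separately: a uniform convergence term over the hypothesis class $\mathcal{F}$, and a concentration term that exploits the boundedness guaranteed by Assumption \ref{maximum}. Concretely, I would first introduce the weighted empirical counterpart $\widehat{\mathcal{L}}(w_n) := \frac{1}{|\mathcal{D}_n|}\sum_{(\mathbf{x}_n^k, y_n^k)\in \mathcal{D}_n} \mathcal{L}(f(w_n;\mathbf{x}_n^k), y_n^k)$ of the expected loss $\mathcal{L}(w_n)$, and use the add-and-subtract identity
\begin{align*}
\sum_{n} \tfrac{|\mathcal{D}_n|}{D}\!\bigl[\mathcal{L}(w_n) - \mathcal{L}(w_n^*)\bigr]
&= \sum_{n} \tfrac{|\mathcal{D}_n|}{D}\!\bigl[\mathcal{L}(w_n) - \widehat{\mathcal{L}}(w_n)\bigr] \\
&\quad + \sum_{n} \tfrac{|\mathcal{D}_n|}{D}\!\bigl[\widehat{\mathcal{L}}(w_n) - \widehat{\mathcal{L}}(w_n^*)\bigr] \\
&\quad + \sum_{n} \tfrac{|\mathcal{D}_n|}{D}\!\bigl[\widehat{\mathcal{L}}(w_n^*) - \mathcal{L}(w_n^*)\bigr].
\end{align*}
Because $w_n^*$ is the FUELS output, it coincides with the empirical minimizer of the local composite objective, so the middle bracket is non-positive and can be discarded; the outer $\max$ over $(w_1,\dots,w_N)$ is then dominated by twice the uniform deviation $\sup_{w\in\mathcal{F}}|\mathcal{L}(w)-\widehat{\mathcal{L}}(w)|$ on each client.

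The first deviation block is handled by a VC-type uniform convergence argument. Sauer--Shelah gives growth function at most $(eD/d)^d$; a standard symmetrization followed by a Massart/Rademacher chaining bound over the pooled dataset of size $D$ then yields $\sup_{w\in\mathcal{F}}|\widehat{\mathcal{L}}(w)-\mathcal{L}(w)| \le \sqrt{\tfrac{2d}{D}\log \tfrac{eD}{d}}$, which matches the first summand on the right-hand side. The second block is controlled by Hoeffding's inequality applied to the bounded losses (by Assumption \ref{maximum}, each summand lies in $[0,\mathrm{C}]$); a union bound over the two potentially-failing deviation events then delivers the $\sqrt{\tfrac{\mathrm{C}^2 D^2}{2}\log\tfrac{1}{\kappa}}$ summand together with the $1-\kappa$ confidence.

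The main obstacle I anticipate is justifying that the uniform convergence bound can be applied cleanly when the personalized parameters $w_n$ depend on the entire federation through the JSD-based prototype aggregation: the samples effectively used to train client $n$ are correlated with the prototypes from other clients. To discharge this, I would either (i) condition on the $\sigma$-algebra generated by the communicated prototypes and argue that the local samples remain exchangeable inside each client, so that the VC-dimension analysis goes through per-client and composes by the weights $|\mathcal{D}_n|/D$; or (ii) enlarge the ambient hypothesis class to absorb the dependence on $PR_n$ and $NR_n$ and re-bound its VC-dimension (at worst inflating $d$ by a constant factor). Matching the precise constants in the second term, in particular the factor $D^2$ inside the square root rather than the textbook $1/D$, will require tracking the normalization used in the empirical sum carefully rather than in its average form.
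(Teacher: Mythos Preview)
Your decomposition is the standard excess-risk maneuver, but the sign of the middle bracket is backwards for this statement. If $w_n^*$ is the empirical minimizer then $\widehat{\mathcal{L}}(w_n)-\widehat{\mathcal{L}}(w_n^*)\ge 0$ for every competitor $w_n$, so after the outer $\max$ this term is \emph{non-negative} and cannot be discarded from an upper bound; in the worst case it is of order $\mathrm{C}$ and swamps the other two pieces. The theorem as written is not an excess-risk inequality (where your argument, with the roles of $w_n$ and $w_n^*$ swapped, would be the right one): it bounds the gap between the \emph{worst} weighted loss and the FUELS loss, and an empirical/population add-and-subtract does not tame that gap.

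The paper's route is different in both halves. It treats the entire left-hand side as a functional $g(w_1,\ldots,w_N)$, verifies a bounded-difference condition $|g(\ldots,w_n,\ldots)-g(\ldots,w_n',\ldots)|\le c_n:=\mathrm{C}|\mathcal{D}_n|^2/D$ directly from Assumption~\ref{maximum}, and applies McDiarmid's inequality to obtain $g\le\mathbb{E}[g]+\sqrt{\tfrac{1}{2}\sum_n c_n^2\log\tfrac{1}{\kappa}}$. The coarse bound $\sum_n c_n^2\le \mathrm{C}^2D^2$ is exactly where the $D^2$ under the square root comes from---Hoeffding on $D$ bounded samples would put $1/D$ there, so the constant-tracking worry in your last paragraph is structural, not cosmetic, and cannot be resolved by ``tracking the normalization carefully.'' The expectation $\mathbb{E}[g]$ is then handled by passing to $\sup_{f\in\mathcal{F}}$, identifying each summand with a Rademacher complexity $\Re_n(\mathcal{F})$, bounding it via Sauer's lemma by $\sqrt{(2d/|\mathcal{D}_n|)\log(e|\mathcal{D}_n|/d)}$, and combining across clients with a Jensen-type step to reach $\sqrt{(2d/D)\log(eD/d)}$. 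Your VC/symmetrization idea matches this last ingredient, but the concentration half must go through McDiarmid on the whole functional rather than per-sample Hoeffding, and the empirical intermediate $\widehat{\mathcal{L}}$ should not be introduced at all; this also renders your cross-client dependence concern moot, since no sample-level concentration is invoked.
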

\begin{proof} 
We denote $g(w_1, \cdots, w_N) = \mathop {\max }\limits_{\left( {{w_1},\cdots,{w_N}} \right)} \left[ {\sum\limits_{n = 1}^N {\frac{{\left| {{\mathcal{D}_n}} \right|}}{D}} \mathcal{L}\left( {{w_n}} \right) - \sum\limits_{n = 1}^N {\frac{{\left| {{\mathcal{D}_n}} \right|}}{D}} \mathcal{L}\left( {w_n^*} \right)} \right]$ and can obtain that
\begin{align}
& \left| {g({w_1},\cdots,{w_n},\cdots, {w_N}) - g({w_1},\cdots,{{w}_n'},\cdots,{w_N})} \right|
\notag\\
&= \left| {\mathop {\max }\limits_{\left( {{w_1},\cdots,{w_N}} \right)} \left[ {\sum\limits_{n = 1}^N {\frac{{\left| {{\mathcal{D}_n}} \right|}}{D}} \mathcal{L}\left( {{w_n}} \right) - \sum\limits_{n = 1}^N {\frac{{\left| {{\mathcal{D}_n}} \right|}}{D}} \mathcal{L}\left( {w_n^*} \right)} \right] - \mathop {\max }\limits_{\left( {{w_1},\cdots, {{w}_n'},\cdots,{w_N}} \right)} \left[ {\sum\limits_{n = 1}^N {\frac{{\left| {{\mathcal{D}_n}} \right|}}{D}} \mathcal{L}\left( {{{w}_n'}} \right) - \sum\limits_{n = 1}^N {\frac{{\left| {{\mathcal{D}_n}} \right|}}{D}} \mathcal{L}\left( {w_n^*} \right)} \right]} \right|
\notag\\
&= \left| {\mathop {\max }\limits_{{w_n}} \left[ {\frac{{\left| {{\mathcal{D}_n}} \right|}}{D}\left( {\mathcal{L}\left( {{w_n}} \right) - \mathcal{L}\left( {w_n^*} \right)} \right)} \right] - \mathop {\max }\limits_{{{w}_n'}} \left[ {\frac{{\left| {{\mathcal{D}_n}} \right|}}{D}\left( {\mathcal{L}\left( {{{w}_n'}} \right) - \mathcal{L}\left( {w_n^*} \right)} \right)} \right]} \right|
\notag\\
&= \frac{{\left| {{\mathcal{D}_n}} \right|}}{D}\left| {\mathop {\max }\limits_{{w_n}} \left[ {\mathcal{L}\left( {{w_n}} \right) - \mathcal{L}\left( {w_n^*} \right)} \right] - \mathop {\max }\limits_{{{w}_n'}} \left[ {\mathcal{L}\left( {{{w}_n'}} \right) - \mathcal{L}\left( {w_n^*} \right)} \right]} \right|
\notag\\
&= \frac{{\left| {{\mathcal{D}_n}} \right|}}{D}\left| {\mathop {\max }\limits_{{w_n},{{w}_n'}} \left[ {\mathcal{L}\left( {{w_n}} \right) - \mathcal{L}\left( {{{w}_n'}} \right) + \mathcal{L}\left( {{{w}_n'}} \right) - \mathcal{L}\left( {w_n^*} \right)} \right] - \mathop {\max }\limits_{{{w}_n'}} \left[ {\mathcal{L}\left( {{{w}_n'}} \right) - \mathcal{L}\left( {w_n^*} \right)} \right]} \right|
\notag\\
&\le \frac{{\left| {{\mathcal{D}_n}} \right|}}{D}\left| {\mathop {\max }\limits_{{w_n},{{w}_n'}} \left[ {\mathcal{L}\left( {{w_n}} \right) - \mathcal{L}\left( {{{w}_n'}} \right)} \right]} \right|
\notag\\
&\mathop  \le \limits^{(a)} \frac{{C{{\left| {{D_n}} \right|}^2}}}{D}
\end{align}
(a) follows Assumption \ref{maximum}.
Therefore, $g$ is difference bounded. From the McDiarmid's inequality, we have
\begin{equation}
    \label{mi}
    \mathbb{P}\left[ {g\left( {{w_1},\cdots, {w_N}} \right) - \mathbb{E}\left[ {g\left( {{w_1},\cdots,{w_N}} \right)} \right] \ge \varepsilon } \right] \le \exp ( - \frac{{2{\varepsilon ^2}}}{{\sum\nolimits_{n = 1}^N {c_n^2} }}),
\end{equation}
where $c_n=\frac{{{\rm C}{{\left| {{\mathcal{D}_n}} \right|}^2}}}{D}$. (\ref{mi}) can be transformed into 
\begin{equation}
    \mathbb{P}\left[ {g\left( {{w_1},\cdots,{w_N}} \right) - \mathbb{E}\left[ {g\left( {{w_1},\cdots,{w_N}} \right)} \right] \le \varepsilon } \right] \ge 1- \exp ( - \frac{{2{\varepsilon ^2}}}{{\sum\nolimits_{n = 1}^N {c_n^2} }}),
\end{equation}
which indicates that with probability at least $1- \exp ( - \frac{{2{\varepsilon ^2}}}{{\sum\nolimits_{n = 1}^N {c_n^2} }})$, 
\begin{equation}
    \label{mi2}
    g\left( {{w_1},\cdots,{w_N}} \right) - \mathbb{E}\left[ {g\left( {{w_1},\cdots,{w_N}} \right)} \right] \le \varepsilon 
\end{equation}

Let $\kappa = \exp ( - \frac{{2{\varepsilon ^2}}}{{\sum\nolimits_{n = 1}^N {c_n^2} }})$, and 
\begin{align}
\varepsilon &= \sqrt {\frac{{\sum\nolimits_{n = 1}^N {c_n^2} }}{2}\ln \frac{1}{\kappa }} 
\notag
\\
&= \sqrt {\frac{{\sum\nolimits_{n = 1}^N {{{\left( {\frac{{{\rm C}{{\left| {{\mathcal{D}_n}} \right|}^2}}}{D}} \right)}^2}} }}{2}\log \frac{1}{\kappa }} 
\notag\\
&\le \sqrt {\frac{{\frac{{{{\rm C}^2}}}{{{D^2}}}{{\left( {\sum\nolimits_{n = 1}^N {{{\left| {{\mathcal{D}_n}} \right|}^2}} } \right)}^2}}}{2}\log \frac{1}{\kappa }} 
\notag\\
&\le \sqrt {\frac{{\frac{{{{\rm C}^2}}}{{{D^2}}}{{\left( {\sum\nolimits_{n = 1}^N {\left| {{\mathcal{D}_n}} \right|} } \right)}^4}}}{2}\log \frac{1}{\kappa }} 
\notag\\
&= \sqrt {\frac{{{{\rm C}^2}{D^2}}}{2}\log \frac{1}{\kappa }} 
\end{align}

Therefore, (\ref{mi2}) can be transformed into
\begin{align}
&\mathop {\max }\limits_{\left( {{w_1},\cdots,{w_N}} \right)} \left[ {\sum\limits_{n = 1}^N {\frac{{\left| {{\mathcal{D}_n}} \right|}}{D}} \mathcal{L}\left( {{w_n}} \right) - \sum\limits_{n = 1}^N {\frac{{\left| {{\mathcal{D}_n}} \right|}}{D}} \mathcal{L}\left( {w_n^*} \right)} \right]
\notag\\
&\le \mathbb{E}\left\{ {\mathop {\max }\limits_{\left( {{w_1},\cdots,{w_N}} \right)} \left[ {\sum\limits_{n = 1}^N {\frac{{\left| {{\mathcal{D}_n}} \right|}}{D}} \mathcal{L}\left( {{w_n}} \right) - \sum\limits_{n = 1}^N {\frac{{\left| {{\mathcal{D}_n}} \right|}}{D}} \mathcal{L}\left( {w_n^*} \right)} \right]} \right\} + \sqrt {\frac{{\rm C}^2D^2}{2}\log \frac{1}{\kappa }}.
\end{align}
We then calculate the upper bound of the first term in the right hand.
\begin{align}
    &\mathbb{E}\left\{ {\mathop {\max }\limits_{\left( {{w_1},\cdots,{w_N}} \right)} \left[ {\sum\limits_{n = 1}^N {\frac{{\left| {{\mathcal{D}_n}} \right|}}{D}} \mathcal{L}\left( {{w_n}} \right) - \sum\limits_{n = 1}^N {\frac{{\left| {{\mathcal{D}_n}} \right|}}{D}} \mathcal{L}\left( {w_n^*} \right)} \right]} \right\}
    \notag\\
     &\le \mathbb{E}\left\{ {\sum\limits_{n = 1}^N {\frac{{\left| {{\mathcal{D}_n}} \right|}}{D}\mathop {\max }\limits_{{w_n}} \left[ {\mathcal{L}\left( {{w_n}} \right) - \mathcal{L}\left( {w_n^*} \right)} \right]} } \right\}
     \notag\\
     &\le \sum\limits_{n = 1}^N {\frac{{\left| {{\mathcal{D}_n}} \right|}}{D}} \mathbb{E}\left\{ {\mathop {\sup }\limits_{f \in \mathcal{F} } \left[ {\mathcal{L}\left( {{w_n}} \right) - \mathcal{L}\left( {w_n^*} \right)} \right]} \right\}
     \notag\\
    &\mathop  = \limits^{(b)} \sum\limits_{n = 1}^N {\frac{{\left| {{\mathcal{D}_n}} \right|}}{D}} {\Re _n}(\mathcal{F} )
    \notag\\
     &\le \sum\limits_{n = 1}^N {\frac{{\left| {{\mathcal{D}_n}} \right|}}{D}} \sqrt {\frac{{2d}}{{\left| {{\mathcal{D}_n}} \right|}}\log \frac{{e\left| {{\mathcal{D}_n}} \right|}}{d}} 
     \notag\\
    &\mathop  \le \limits^{(c)} \sqrt {\frac{{2d}}{D}\log \frac{{eD}}{d}},
\end{align}
where $\mathcal{F}$ is the hypothesis set of the prediction model $f$ and $d$ is the VC-dimension of $\mathcal{F}$. (b) follows the definition of Rademacher complexity and Sauer's Lemma and (c) follows Jensen's inequality.

Therefore, 
\begin{align}
&\mathop {\max }\limits_{\left( {{w_1},\cdots,{w_N}} \right)} \left[ {\sum\limits_{n = 1}^N {\frac{{\left| {{\mathcal{D}_n}} \right|}}{D}} \mathcal{L}\left( {{w_n} } \right) - \sum\limits_{n = 1}^N {\frac{{\left| {{\mathcal{D}_n}} \right|}}{D}} \mathcal{L}\left( {w_n^*} \right)} \right]
\notag\\
&\le \sqrt {\frac{{2d}}{D}\log \frac{{eD}}{d}}  + \sqrt {\frac{{{{\rm C}^2}{D^2}}}{2}\log \frac{1}{\kappa }}.
\end{align}

\end{proof}

\subsection{Proof of Convergence Rate}
\label{proof2}
Let $w_{n,t}^i$ denote the local model parameters of client $n$ at the $i$-th iteration in the $t$-th FL round, where $0\le t\le T-1$ and $0\le i\le I-1$. $T$ and $I$ denote the total number of FL rounds and local iterations respectively.
$w_{n,t}^I$ represents the local model parameters of client $n$ after $I$ iterations in the $t$-th FL round. We denote $w_{n,t+1} = w_{n,t}^I$, which represents the local model of client $n$ in the $t+1$ round before the first iteration.
\begin{assumption}(\textit{Bounded Expectation of Gradients})
\label{G2}
The expectation of gradient of loss function $\mathcal{L}(\cdot)$ is uniformly bounded, i.e., ${\rm \mathbb{E}}(||\nabla {\mathcal{L}}( \cdot )|{|}) \le {G}.$
\end{assumption}
\begin{assumption}(\textit{Lipschitz Smooth})
\label{l-smooth2}
The loss function $\mathcal{L}$ is $L_1$-smooth, i.e., $\mathcal{L}(w) - \mathcal{L}(w') \le \left\langle {\nabla \mathcal{L}(w'), w-w'} \right\rangle  + L_1||w-w'|{|^2}, \forall w, w', \exists L_1 > 0.$
\end{assumption}
\begin{assumption}(\textit{Lipschitz Continuity})
Suppose $h:{\mathcal{A}_1} \times$
${\mathcal{A}_2} \times  \cdots  \to {\mathcal{A}_T}$ is Lipschitz Continuous in $\mathcal{A}_j$, i.e., $\exists L_h, \forall$
$a_j, \hat{a}_j \in \mathcal{A}_j,  \left\| {h({a_1}, \cdots ,{a_j}, \cdots ) - h({a_1}, \cdots ,{{\hat a}_j}, \cdots )} \right\| \le {L_h}\left\| {{a_j} - {{\hat a}_j}} \right\|$.
\label{continuous2}
\end{assumption}
\begin{theorem}(\textit{Convergence Rate})
    Let Assumption \ref{G2} to \ref{continuous2} hold. Let $\mathcal{L}\left( {w_{n,0}^0} \right) - \mathcal{L}\left( {w_n^*} \right) = \Lambda $.
    If clients adopt stochastic gradient descent method to optimize local prediction models with the learning rate equal to $\eta$, for any client, given $\xi >0$, after
    \begin{equation}
        T = \frac{\Lambda }{{\xi I(\eta  - {L_1}{\eta ^2}) - \rho \eta {L_h}N\alpha {I^2}G}}
        \notag
    \end{equation}
    FL rounds, we can obtain 
    \begin{equation}
        \frac{1}{{TI}}\sum\limits_{t = 0}^{T - 1} {\sum\limits_{i = 0}^{I - 1} {\mathbb{E}\left[ {{{\left\| {\nabla \mathcal{L}\left( {w_{n,t}^i} \right)} \right\|}^2}} \right]} }  < \xi 
        \notag
    \end{equation}
    with 
    \begin{equation}
        \eta < \frac{{\xi  - \rho {L_h}N\alpha IG}}{{{L_1}\xi }}.
        \notag
    \end{equation}
    
\end{theorem}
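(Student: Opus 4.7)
The plan is to follow the standard descent-lemma-plus-summation template for federated SGD, with an added correction term that accounts for between-round changes in the client-customized global prototypes $PR_n, NR_n$. Throughout, I will treat the loss as a function of (i) the local parameters $w_n$, whose inner-iteration evolution is controlled by Assumption~\ref{l-smooth2} and the SGD update, and (ii) the prototypes, whose between-round drift is controlled by Assumption~\ref{continuous2} and bounded by the aggregate movement of the $N\alpha$ participating clients via Assumption~\ref{G2}.

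First, within a single FL round $t$ the prototypes are held fixed. Using Assumption~\ref{l-smooth2} with the SGD step $w_{n,t}^{i+1} = w_{n,t}^{i} - \eta\nabla\mathcal{L}(w_{n,t}^{i})$, I would derive the one-step descent inequality
\begin{equation}
\mathbb{E}[\mathcal{L}(w_{n,t}^{i+1})] - \mathbb{E}[\mathcal{L}(w_{n,t}^{i})] \le -(\eta - L_1\eta^2)\,\mathbb{E}\bigl[\|\nabla\mathcal{L}(w_{n,t}^{i})\|^2\bigr],
\end{equation}
and then telescope over $i=0,\dots,I-1$ to obtain a per-round descent bound in terms of the cumulative squared gradient norm.

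Second, between rounds $t$ and $t+1$ the local iterate is unchanged ($w_{n,t+1}^0 = w_{n,t}^I$) but the aggregated prototypes $PR_n$ and $NR_n$ are refreshed because each of the $N\alpha$ selected clients updates its own $R_m$. By Assumption~\ref{G2}, each such client moves its parameters by at most $I\eta G$ over one round in expectation; propagating this through the encoder and the averaging step with the Lipschitz constant from Assumption~\ref{continuous2}, and weighting the inter-client term by $\rho$, the resulting shift in the loss evaluated at the same $w$ is at most $\rho\,L_h\,N\alpha\,I\eta G$ per iteration that the prototypes effectively influence. Attributing this drift across the $I$ inner iterations of the following round (this is the delicate bookkeeping step that yields the $I^2$ factor in the denominator) gives an additive correction $\rho\,\eta\,L_h\,N\alpha\,I^2\,G$ on top of the per-round descent.

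Third, I would combine the two bounds and sum over $t=0,\dots,T-1$, using $\mathcal{L}(w_{n,T}^0)\ge\mathcal{L}(w_n^*)$ and $\mathcal{L}(w_{n,0}^0)-\mathcal{L}(w_n^*)=\Lambda$, to obtain
\begin{equation}
(\eta - L_1\eta^2)\sum_{t=0}^{T-1}\sum_{i=0}^{I-1}\mathbb{E}\bigl[\|\nabla\mathcal{L}(w_{n,t}^{i})\|^2\bigr] \le \Lambda + T\,\rho\,\eta\,L_h\,N\alpha\,I^2\,G.
\end{equation}
Dividing by $TI$ and requiring the right-hand side, once divided, to be smaller than $\xi$ yields exactly the stated lower bound on $T$ in Eq.~(\ref{T}); the admissibility condition that the denominator be positive produces the learning-rate constraint in Eq.~(\ref{eta}). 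The main obstacle will be step two: rigorously tracking how the aggregate prototype drift, built from the cumulative SGD trajectories of the $N\alpha$ sampled peers, translates into a per-iteration loss-drift bound with the correct $N\alpha\,I^2$ scaling; this requires carefully stating which norms the Lipschitz constant $L_h$ is taken in (encoder output space versus parameter space) and verifying that the averaging in $PR_n, NR_n$ does not change the worst-case bound.
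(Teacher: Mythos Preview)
Your outline matches the paper's proof almost exactly: descent lemma within a round, telescope over $i$, add a prototype-drift correction between rounds, telescope over $t$, and read off $T$ and $\eta$. The one place where your heuristic diverges from the paper is the origin of the $I^2$ factor in the drift term, and since you flag this as the delicate step it is worth getting the mechanism right.

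In the paper, the extra $I$ does \emph{not} come from ``attributing this drift across the $I$ inner iterations of the following round.'' The prototypes are fixed throughout round $t{+}1$, so the refresh produces a single jump $\mathcal{L}(w_{n,t+1}^0)-\mathcal{L}(w_{n,t+1})$ in the loss, not a per-iteration contribution. The correct source is that the periodicity-aware prototype is itself an average of the $I$ batch representations,
\[
R_m \;=\; \frac{1}{I}\sum_{i=0}^{I-1} en(\delta_{m,t-1}^{i}),
\]
so $\mathcal{L}_{inter}$, viewed as the function $h$ of Assumption~\ref{continuous2}, has $N\alpha\cdot I$ arguments that change between rounds (one per inner iteration of each selected peer). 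Applying the coordinate-wise Lipschitz bound to each of these $N\alpha I$ arguments, and noting that $\delta_{m,t}^{i}-\delta_{m,t-1}^{i}$ is the result of exactly $I$ SGD steps (the $I-i$ remaining steps of round $t-1$ plus the first $i$ steps of round $t$), hence $\mathbb{E}\|\delta_{m,t}^{i}-\delta_{m,t-1}^{i}\|\le I\eta G$, yields
\[
\mathbb{E}\bigl[\mathcal{L}(w_{n,t+1}^0)-\mathcal{L}(w_{n,t+1})\bigr]\;\le\;\rho L_h\sum_{m\in\mathcal{SC}_t}\sum_{i=0}^{I-1}\mathbb{E}\|\delta_{m,t}^{i}-\delta_{m,t-1}^{i}\|\;\le\;\rho\,\eta\,L_h\,N\alpha\,I^2\,G.
\]
So both factors of $I$ are structural: one counts the arguments of $h$ that move, the other bounds how far each moves. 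With this correction your sketch goes through exactly as the paper's proof does.
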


\begin{proof}
Let Assumption \ref{l-smooth2} hold, we have
\begin{align}
\mathcal{L}\left( {w_{n,t}^1} \right)\mathop  \le \mathcal{L}\left( {w_{n,t}^0} \right) + \left\langle {\nabla \mathcal{L}\left( {w_{n,t}^0} \right),\left( {w_{n,t}^1 - w_{n,t}^0} \right)} \right\rangle  + {L_1}{\left\| {w_{n,t}^1 - w_{n,t}^0} \right\|^2}.
\end{align}
With $w_{n,t}^1 = w_{n,t}^0 - \eta \nabla \mathcal{L}\left( {w_{n,t}^0} \right)$, it can be transformed into
\begin{align}
\mathcal{L}\left( {w_{n,t}^1} \right) &\le \mathcal{L}\left( {w_{n,t}^0} \right) - \eta \left\langle {\nabla \mathcal{L}\left( {w_{n,t}^0} \right),\nabla \mathcal{L}\left( {w_{n,t}^0} \right)} \right\rangle  + {L_1}{\eta ^2}{\left\| {\nabla \mathcal{L}\left( {w_{n,t}^0} \right)} \right\|^2}
\notag\\
&= \mathcal{L}\left( {w_{n,t}^0} \right) - \eta {\left\| {\nabla \mathcal{L}\left( {w_{n,t}^0} \right)} \right\|^2} + {L_1}{\eta ^2}{\left\| {\nabla \mathcal{L}\left( {w_{n,t}^0} \right)} \right\|^2}.
\end{align}
We can obtain 
\begin{align}
\mathcal{L}\left( {w_{n,t}^1} \right) &\le \mathcal{L}\left( {w_{n,t}^0} \right) - (\eta  - {L_1}{\eta ^2}){\left\| {\nabla \mathcal{L}\left( {w_{n,t}^0} \right)} \right\|^2}
\notag\\
\mathcal{L}\left( {w_{n,t}^2} \right) &\le \mathcal{L}\left( {w_{n,t}^1} \right) - (\eta  - {L_1}{\eta ^2}){\left\| {\nabla \mathcal{L}\left( {w_{n,t}^1} \right)} \right\|^2}
\notag\\
&\vdots 
\notag\\
\mathcal{L}\left( {{w_{n,t + 1}}} \right) = \mathcal{L}\left( {w_{n,t}^I} \right) &\le \mathcal{L}\left( {w_{n,t}^{I - 1}} \right) - (\eta  - {L_1}{\eta ^2}){\left\| {\nabla \mathcal{L}\left( {w_{n,t}^{I - 1}} \right)} \right\|^2}.
\notag
\end{align}
Therefore, 
\begin{equation}
\label{th1}
    \mathcal{L}\left( {{w_{n,t + 1}}} \right) \le \mathcal{L}\left( {w_{n,t}^0} \right) - (\eta  - {L_1}{\eta ^2})\sum\limits_{i = 1}^{I} {{{\left\| {\nabla \mathcal{L}\left( {w_{n,t}^{i - 1}} \right)} \right\|}^2}}.
\end{equation}
Furthermore, we have
\begin{align}
\label{tmp1}
\mathcal{L}\left( {w_{n,t + 1}^0} \right) = \mathcal{L}\left( {{w_{n,t + 1}}} \right) + \mathcal{L}\left( {w_{n,t + 1}^0} \right) - \mathcal{L}\left( {{w_{n,t + 1}}} \right).
\end{align}
With (16),
\begin{align}
&\mathcal{L}\left( {w_{n,t + 1}^0} \right) - \mathcal{L}\left( {{w_{n,t + 1}}} \right)
\notag\\
&= {\mathcal{L}_{pred}}\left( {{\delta _{n,t + 1}},{\phi _{n,t + 1}}} \right) + {\mathcal{L}_{ intra}}\left( {{\delta _{n,t + 1}}}, W_{n,t+1} \right) + \rho {\mathcal{L}_{ inter}}\left( {{\delta _{n,t + 1}};P{R_{n, t + 1}},N{R_{n, t + 1}}} \right) 
\notag\\
&- {\mathcal{L}_{pred}}\left( {{\delta _{n,t + 1}},{\phi _{n,t + 1}}} \right) - {\mathcal{L}_{ intra}}\left( {{\delta _{n,t + 1}}}, W_{n,t+1} \right) - \rho {\mathcal{L}_{inter}}\left( {{\delta _{n,t + 1}};P{R_{n,t}},N{R_{n,t}}} \right)
\notag\\
&= \rho \left\{ {{\mathcal{L}_{inter}}\left( {{\delta _{n,t + 1}};P{R_{n, t + 1}},N{R_{n, t + 1}}} \right) - {\mathcal{L}_{inter}}\left( {{\delta _{n,t + 1}};P{R_{n,t}},N{R_{n,t}}} \right)} \right\}.
\end{align}
Thereinto, $\delta_{n,t}$ denotes the encoder parameters of client $n$ at the $t$-th round. $PR_{n,t}$ and $NR_{n,t}$ represent the global positive and negative prototype for client $n$ respectively at the $t$-th round. We have
\begin{align}
{PR_{n,t}} &= \frac{1}{{\left| {{\mathcal{PC}_{n,t}}} \right|}}\sum\limits_{m \in {\mathcal{PC}_{n,t}}} {{R_{m,t}}} 
\notag\\
&= \frac{1}{{\left| {{\mathcal{PC}_{n,t}}} \right|}}\sum\limits_{m \in {\mathcal{PC}_{n,t}}} {\left[ {\frac{1}{I}\sum\limits_{i = 0}^{I - 1} {en\left( {\delta _{m,t - 1}^i} \right)} } \right]} 
\notag\\
&= \frac{1}{{I\left| {{\mathcal{PC}_{n,t}}} \right|}}\sum\limits_{m \in {\mathcal{PC}_{n,t}}} {\sum\limits_{i = 0}^{I - 1} {en\left( {\delta _{m,t - 1}^i} \right)} },
\end{align}
where $\mathcal{PC}_{n,t}$ denotes the positive set of client $n$ in the $t$-th round.
Similarly, 
\begin{align}
N{R_{n,t}} = \frac{1}{{I\left| {{\mathcal{NC}_{n,t}}} \right|}}\sum\limits_{m \in {\mathcal{NC}_{n,t}}} {\sum\limits_{i = 0}^{I - 1} {en\left( {\delta _{m,t - 1}^i} \right)} }.
\end{align}
Therefore, 
\begin{align}
&{\mathcal{L}_{ inter}}\left( {{\delta _{n,t + 1}};P{R_{n,t}},N{R_{n,t}}} \right)
\notag\\
&= {\mathcal{L}_{ inter}}\left( {{\delta _{n,t + 1}};\frac{1}{{I\left| {{\mathcal{PC}_{n,t}}} \right|}}\sum\limits_{m \in {\mathcal{PC}_{n,t}}} {\sum\limits_{i = 0}^{I - 1} {en\left( {\delta _{m,t - 1}^i} \right)} } ,\frac{1}{{I\left| {{\mathcal{NC}_{n,t}}} \right|}}\sum\limits_{m \in {\mathcal{NC}_{n,t}}} {\sum\limits_{i = 0}^{I - 1} {en\left( {\delta _{m,t - 1}^i} \right)} } } \right)
\notag\\
&{\buildrel \Delta \over =} h\left( {\delta _{1,t - 1}^0,\delta _{1,t - 1}^1, \cdots ,\delta _{1,t - 1}^{I - 1}, \cdots ,\delta _{N,t - 1}^0,\delta _{N,t - 1}^1, \cdots ,\delta _{N,t - 1}^{I - 1}} \right).
\end{align}
We can treat $\mathcal{L}_{inter}$ as a multivariate function $h$ with $NI$ variants, i.e., $\delta _{m,t - 1}^i, 1\le m\le N$, and $0\le i\le I-1$. 
We make the assumption that $h$ is Lipschitz continuous. Following Assumption \ref{continuous2}, we have $\exists {L_h} > 0$,
\begin{align}
&||h\left( {\delta _{1,t - 1}^0,\delta _{1,t - 1}^1, \cdots ,\delta _{1,t - 1}^{I - 1}, \cdots ,\delta _{N,t - 1}^0,\delta _{N,t - 1}^1, \cdots ,\delta _{N,t - 1}^{I - 1}} \right)
\notag\\
&- h\left( {\hat \delta _{1,t - 1}^0,\delta _{1,t - 1}^1, \cdots ,\delta _{1,t - 1}^{I - 1}, \cdots ,\delta _{N,t - 1}^0,\delta _{N,t - 1}^1, \cdots ,\delta _{N,t - 1}^{I - 1}} \right)||
\notag\\
&\le {L_h}\left\| {\delta _{1,t - 1}^0 - \hat \delta _{1,t - 1}^0} \right\|.
\end{align}
Therefore, 
\begin{align}
&{\mathcal{L}_{inter}}\left( {{\delta _{n,t + 1}};P{R_{t + 1}},N{R_{t + 1}}} \right) - {\mathcal{L}_{inter}}\left( {{\delta _{n,t + 1}};P{R_{n,t}},N{R_{n,t}}} \right)
\notag\\
&= h\left( {\delta _{1,t}^0,\delta _{1,t}^1, \cdots ,\delta _{1,t}^{I - 1}, \cdots ,\delta _{N,t}^0,\delta _{N,t}^1, \cdots ,\delta _{N,t}^{I - 1}} \right) - h\left( {\delta _{1,t - 1}^0,\delta _{1,t - 1}^1, \cdots ,\delta _{1,t - 1}^{I - 1}, \cdots ,\delta _{N,t - 1}^0,\delta _{N,t - 1}^1, \cdots ,\delta _{N,t - 1}^{I - 1}} \right)
\notag\\
&\le \left\| {h\left( {\delta _{1,t}^0,\delta _{1,t}^1, \cdots ,\delta _{1,t}^{I - 1}, \cdots ,\delta _{N,t}^0,\delta _{N,t}^1, \cdots ,\delta _{N,t}^{I - 1}} \right) - h\left( {\delta _{1,t - 1}^0,\delta _{1,t - 1}^1, \cdots ,\delta _{1,t - 1}^{I - 1}, \cdots ,\delta _{N,t - 1}^0,\delta _{N,t - 1}^1, \cdots ,\delta _{N,t - 1}^{I - 1}} \right)} \right\|
\notag\\
&\le {L_h}\left( {\sum\limits_{m \in \mathcal{SC}_t} {\sum\limits_{i = 0}^{I - 1} {\left\| {\delta _{m,t}^i - \delta _{m,t - 1}^i} \right\|} } } \right),
\end{align}
where $\mathcal{SC}_t$ represents the set of selected $N\alpha$ clients in the $t$-th FL round.

Therefore, (\ref{tmp1}) can be transformed into 
\begin{align}
\mathcal{L}\left( {w_{n,t + 1}^0} \right) &= \mathcal{L}\left( {{w_{n,t + 1}}} \right) + \mathcal{L}\left( {w_{n,t + 1}^0} \right) - \mathcal{L}\left( {{w_{n,t + 1}}} \right)
\notag\\
&\le \mathcal{L}\left( {{w_{n,t + 1}}} \right) + \rho {L_h}\left( {\sum\limits_{m \in {\mathcal{SC}_t}} {\sum\limits_{i = 0}^{I - 1} {\left\| {\delta _{m,t}^i - \delta _{m,t - 1}^i} \right\|} } } \right)
\notag\\
&= \mathcal{L}\left( {{w_{n,t + 1}}} \right) + \rho {L_h}\sum\limits_{m \in {\mathcal{SC}_t}} {\sum\limits_{i = 0}^{I - 1} {\left\| {\sum\limits_{j = i}^{I - 1} {\eta \Delta w_{m,t - 1}^j}  + \sum\limits_{j = 0}^{i - 1} {\eta \Delta w_{m,t}^j} } \right\|} } .
\end{align}
Take expectations on both sides, we have
\begin{align}
\label{th2}
\mathbb{E}\left[ {\mathcal{L}\left( {w_{n,t + 1}^0} \right)} \right] &\le \mathcal{L}\left( {{w_{n,t + 1}}} \right) + \rho {L_h}\sum\limits_{m \in {\mathcal{SC}_t}} {\sum\limits_{i = 0}^{I - 1} {\left\| {\sum\limits_{j = i}^{I - 1} {\eta \mathbb{E}\left[ {\Delta w_{m,t - 1}^i} \right]}  + \sum\limits_{j = 0}^{i - 1} {\eta \mathbb{E}\left[ {\Delta w_{m,t}^i} \right]} } \right\|} } 
\notag\\
&\le \mathcal{L}\left( {{w_{n,t + 1}}} \right) + \rho {L_h}\sum\limits_{m \in {\mathcal{SC}_t}} {\sum\limits_{i = 0}^{I - 1} {\eta IG} } 
\notag\\
&\le \mathcal{L}\left( {{w_{n,t + 1}}} \right) + \rho \eta {L_h}N\alpha {I^2}G
\end{align}
By incorporating (\ref{th1}) with (\ref{th2}), we can obtain
\begin{equation}
\label{th3}
\mathbb{E}\left[ {\mathcal{L}\left( {w_{n,t + 1}^0} \right)} \right] \le \mathcal{L}\left( {w_{n,t}^0} \right) + \rho \eta {L_h}N\alpha {I^2}G - (\eta  - {L_1}{\eta ^2})\sum\limits_{i = 0}^{I - 1} {{{\left\| {\nabla \mathcal{L}\left( {w_{n,t}^i} \right)} \right\|}^2}} .
\end{equation}

To ensure 
\begin{equation}
\rho \eta {L_h}N\alpha {I^2}G - (\eta  - {L_1}{\eta ^2})\sum\limits_{i = 0}^{I - 1} {{{\left\| {\nabla L\left( {w_{n,t}^i} \right)} \right\|}^2}}  \le 0,
\end{equation}
we can obtain
\begin{equation}
\eta  \le \frac{{\sum\limits_{i = 0}^{I - 1} {{{\left\| {\nabla \mathcal{L}\left( {w_{n,t}^i} \right)} \right\|}^2} - \rho {L_h}N\alpha {I^2}G} }}{{{L_1}\sum\limits_{i = 0}^{I - 1} {{{\left\| {\nabla \mathcal{L}\left( {w_{n,t}^i} \right)} \right\|}^2}} }}.
\end{equation}

Considering all the FL rounds, i.e., $0\le t\le T-1$, with (\ref{th3}), we have
\begin{equation}
\sum\limits_{t = 0}^{T - 1} {\mathbb{E}\left[ {\mathcal{L}\left( {w_{n,t + 1}^0} \right)} \right]}  \le \sum\limits_{t = 0}^{T - 1} {\mathcal{L}\left( {w_{n,t}^0} \right)}  + T\rho \eta {L_h}N\alpha {I^2}G - (\eta  - {L_1}{\eta ^2})\sum\limits_{t = 0}^{T - 1} {\sum\limits_{i = 0}^{I - 1} {\mathbb{E}\left[ {{{\left\| {\nabla \mathcal{L}\left( {w_{n,t}^i} \right)} \right\|}^2}} \right]} } ,
\end{equation}
which can be transformed into 
\begin{equation}
\frac{1}{{TI}}\sum\limits_{t = 0}^{T - 1} {\sum\limits_{i = 0}^{I - 1} {\mathbb{E}\left[ {{{\left\| {\nabla \mathcal{L}\left( {w_{n,t}^i} \right)} \right\|}^2}} \right]} }  \le \frac{{\frac{1}{{TI}}\sum\limits_{t = 0}^{T - 1} {\left\{ {\mathcal{L}\left( {w_{n,t}^0} \right) - E\left[ {\mathcal{L}\left( {w_{n,t + 1}^0} \right)} \right]} \right\}}  + \rho \eta {L_h}N\alpha IG}}{{\eta  - {L_1}{\eta ^2}}}.
\end{equation}
Suppose $\mathcal{L}\left( {w_{n,0}^0} \right) - \mathcal{L}\left( {w_n^*} \right) = \Lambda $, 
\begin{equation}
\frac{{\frac{1}{{TI}}\sum\limits_{t = 0}^{T - 1} {\left\{ {\mathcal{L}\left( {w_{n,t}^0} \right) - E\left[ {\mathcal{L}\left( {w_{n,t + 1}^0} \right)} \right]} \right\}}  + \rho \eta {L_h}N\alpha IG}}{{\eta  - {L_1}{\eta ^2}}} < \frac{{\frac{\Lambda }{{TI}} + \rho \eta {L_h}N\alpha IG}}{{\eta  - {L_1}{\eta ^2}}} < \xi .
\end{equation}
Therefore, we can obtain that, with
\begin{equation}
T > \frac{\Lambda }{{\xi I(\eta  - {L_1}{\eta ^2}) - \rho \eta {L_h}N\alpha {I^2}G}},
\end{equation}
\begin{equation}
\frac{1}{{TI}}\sum\limits_{t = 0}^{T - 1} {\sum\limits_{i = 0}^{I - 1} {\mathbb{E}\left[ {{{\left\| {\nabla \mathcal{L}\left( {w_{n,t}^i} \right)} \right\|}^2}} \right]} }  < \xi,
\end{equation}
if 
\begin{equation}
\eta  \le \frac{{\sum\limits_{i = 0}^{I - 1} {{{\left\| {\nabla \mathcal{L}\left( {w_{n,t}^i} \right)} \right\|}^2} - \rho {L_h}N\alpha {I^2}G} }}{{{L_1}\sum\limits_{i = 0}^{I - 1} {{{\left\| {\nabla \mathcal{L}\left( {w_{n,t}^i} \right)} \right\|}^2}} }} < \frac{{I\xi  - \rho {L_h}N\alpha {I^2}G}}{{{L_1}I\xi }} = \frac{{\xi  - \rho {L_h}N\alpha IG}}{{{L_1}\xi }}.
\end{equation}

\end{proof}

\begin{table}[htbp]
    \small
  \centering
  \caption{Comparison of FUELS and three baselines in terms of communication parameters (\textit{Comm}) and computation cost (\textit{Comp}). $|w|$ and $|\delta|$ denote the number of parameters in prediction model and in the encoder respectively. $I'$ denotes the number of batches of the test dataset. $\mathcal{T}$ in PerFedAvg denotes the number of local gradient descent steps.}
  \vskip 0.1in
    \begin{tabular}{cccccc}
    \toprule
          & \multicolumn{2}{c}{\textit{Comm}} & \multicolumn{3}{c}{\textit{Comp}} \\
          & Client  & Server & Train (Client) & Infer (Client) & Train (Server) \\
    \midrule
    FedAvg & $\mathcal{O}(\left| w \right|)$     & $\mathcal{O}(N\alpha \left| w \right|)$     & $\mathcal{O}(2I(FE + FD))$     & $\mathcal{O}(I'(FE + FD))$     & $\mathcal{O}(1)$ \\
    FedRep & $\mathcal{O}(\left| \delta  \right|)$     & $\mathcal{O}(N\alpha \left| \delta  \right|)$     & $\mathcal{O}(2I(FE + FD))$     & $\mathcal{O}(I'(FE + FD))$     & $\mathcal{O}(1)$ \\
    PerFedAvg & $\mathcal{O}(\left| w \right|)$    & $\mathcal{O}(N\alpha \left| w \right|)$     & $\mathcal{O}(6\mathcal{T} (FE + FD))$     & $\mathcal{O}((I' + 2)(FE + FD))$     & $\mathcal{O}(1)$ \\
    \rowcolor{blue!40} FUELS & $\mathcal{O}(B{d_r})$     & $\mathcal{O}(2N\alpha B{d_r})$     & $\mathcal{O}(I(3FE + 2FF + 2FD))$     & $\mathcal{O}(I'(FE + FD))$     & $\mathcal{O}\left( {\left( {\alpha  - \frac{{{\alpha ^2}}}{2}} \right){N^2} - \frac{\alpha }{2}N} \right)$ \\
    \bottomrule
    \end{tabular}%
  \label{complexity}%
  \vskip -0.1in
\end{table}%
\subsection{Complexity Analysis}
\label{analysis}
We compare the computation cost and the number of communication parameters in FUELS with three personalized FL methods, which is presented in Table \ref{complexity}.
\textit{Comm} represents the number of parameters each client (the server) transmitted to the server (all the selected clients) each round.
The 4-th and 6-th rows in Table \ref{complexity} present the computation cost of a client and the server in each FL training round respectively. \textit{Comp} in the inference stage represents the computation cost for inferring the total test dataset.

Compared with the other methods based on model transmission, FUELS is highly communication-efficient, i.e., $B{d_r} \ll \left| \delta  \right| < \left| w \right|$.
In the forward propagation, FUELS should encode the augmented data for the subsequent intra-client contrastive task, and hence resulting in more computation cost in the training stage.
The server should update the JSD matrix according to the fresh local prototypes to generate the customized global prototypes for the next round. We deem that the introduced computation cost at the server can be overlooked, given the abundant resources of the server.

\section{Experimental Details and Additional Results}

\subsection{Experimental Details}
\label{exprimnt}
\textbf{Datasets.} The adopted dataset is from the Big Data Challenge programmed by the Telecom Italia \cite{barlacchi2015multi}. The area of Trentino is divided into 6575 cells with size of $235\times235$ meters, which is shown in Figure \ref{map}.
The dataset includes three types of Call Detail Records (CDRs), i.e., Short Message Services (SMS), Voice Calls (Call), and Internet Services (Net), detected from 2013/11/01 to 2014/01/01 every 10 minutes. 
The diversity of traffic data from different cells exhibits the spatial heterogeneity. 
For example, the traffic data of the cells in different function zones may have diverse temporal patterns, i.e., traffic distribution in downtown area is quite different from that in suburb area.

\begin{figure}[ht]
\vskip 0.1in
    \centering
    \includegraphics[width=0.7\linewidth]{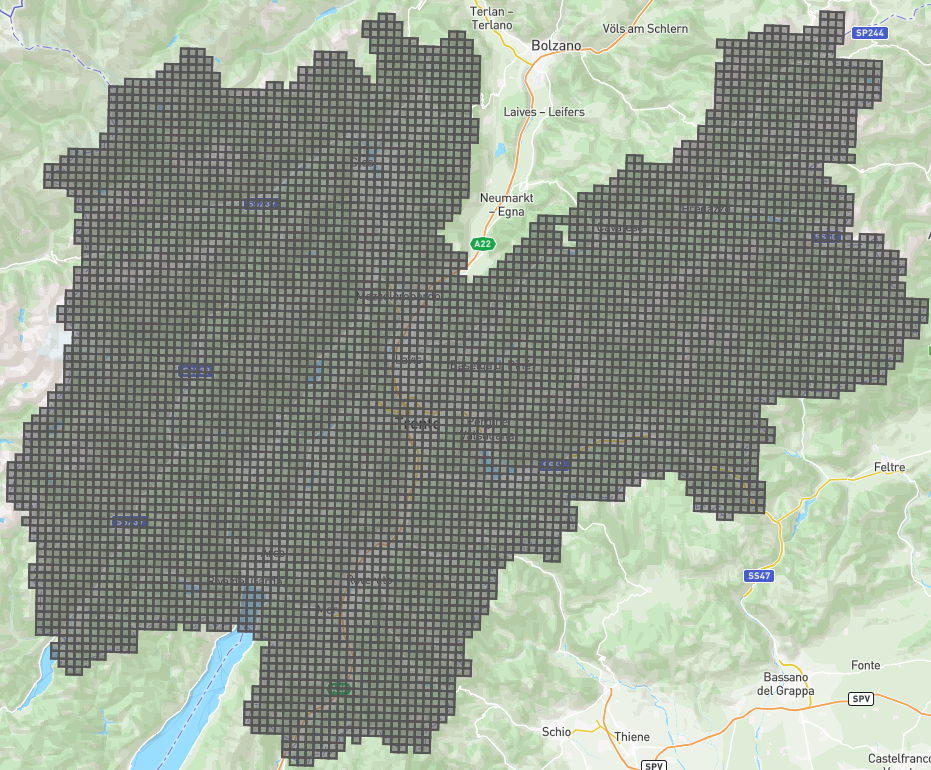}
    \caption{The division of Trentino.}
    \label{map}
\vskip -0.1in
\end{figure}
\textbf{Baselines.} We compare the performance of FUELS with the following baselines.
\begin{itemize}[leftmargin=*]
    \item \textbf{Solo}: All clients train their own local prediction models using local data and do not communicate with each other or with the server.
    \item \textbf{FedAvg} \cite{mcmahan2017communication}: It is a classical FL method, where the server aggregates the local model updates via averaging strategy.
    \item \textbf{FedProx} \cite{li2020federated}: Compared with FedAvg, a proximal term is added to the local loss function for tackling with the statistical heterogeneity.
    \item \textbf{FedRep} \cite{collins2021exploiting}: In this method, only parameters of low layers (encoder in our paper) are communicated between clients and the server for learning shared representations. Each client maintains its personalized head (decoder in our paper) for the final prediction.
    \item \textbf{PerFedAvg} \cite{fallah2020personalized}: This method aims to train an initial model which can quickly adapt to clients' local data via several steps of gradient descent. 
    \item \textbf{pFedMe} \cite{DBLP:conf/nips/DinhTN20}: In this method, by the designed regularized loss function, each client can train the global model and simultaneously optimize the personalized model.
    \item \textbf{FedDA} \cite{zhang2021dual}: In this method, the server firstly train a initial global model via the uploaded augmented data from clients. In the FL training stage, the server aggregates local models by the designed dual attention mechanism.
\end{itemize}

\textbf{Experimental Setup.}
To tackle with the problem of data sparsity, traffic data are resampled every 1 hour as per \cite{zhang2021dual}. Without loss of generality, we randomly select 100 cells as the clients, i.e., $N=100$.
The experimental setup of FedDA complies with the corresponding literature, unless stated otherwise.
The experimental settings of the other baselines are the same as FUELS for the sake of fairness.
In our experiments, the $\rm GRU_c$ and $\rm GRU_p$ all have 1 GRU layer with 128 cells.
The decoder is a fully-connected layer.
The batch size $B$ is set to 24. The closeness window $c$ and periodic window $q$ are both set to 3. The temperature rate $\tau$ is set to $0.02$.
For each dataset, we set $\beta$ as the 50-th percentile of all JSD values.
The training round of FUELS $T$ is set to 200, and each client transmits periodicity-aware prototypes to the server after conducting local training for 1 epoch at each round.
We use an Adam optimizer with the learning rate 0.001 \cite{kingma2014adam} .

\subsection{Privacy Protection}
\label{sec_privacy}
The widely adopted differential privacy in FL framework for preserving privacy is based on adding random noise with $Laplace$, $Guassian$, or $exponential$ distribution to the uploaded model parameters.
However, this approach may perturb the model parameters, thus worsening the final prediction performance.
Similarly, we add various random noise to the communicated local prototypes. The performance comparison is presented in Table \ref{privacy}. We observe that FUELS with different types of random noise can always achieve the superior performance, compared with FedAvg. Moreover, FUELS promises the combination of privacy-preserving techniques without significant performance decrease.
\begin{table}[htbp]
  \centering
  \caption{The performance of FUELS with different types of privacy-preserving mechanisms. $\mu$ and $\lambda$ denote the mean value and the standard deviation of the corresponding noise distribution.}
  \vskip 0.1in
    \begin{tabular}{cccccccccc}
    \toprule
    Dataset & \multicolumn{2}{c}{SMS} & \multicolumn{2}{c}{Call} & \multicolumn{2}{c}{Net} \\
    Metric & MSE     & MAE   & MSE     & MAE   & MSE    & MAE \\
    \midrule
    \rowcolor{yellow!40} FedAvg & 1.452   & 0.533  & 0.393    & 0.300  & 2.649    & 0.638  \\
    \rowcolor{blue!40}FUELS & 1.249    & 0.541  & 0.353    & 0.311  & 0.880    & 0.488  \\
    \rowcolor{pink!40} FUELS + $Laplace(\mu=0,\lambda=1)$ & 1.397    & 0.563  & 0.443    & 0.335  & 1.086    & 0.531  \\
    \rowcolor{pink!40} FUELS + $Guassian(\mu=0,\lambda=1)$ & 1.395    & 0.564  & 0.419    & 0.328  & 1.117  &  0.531  \\
    \rowcolor{pink!40} FUELS + $exponential(\lambda=1)$   & 1.365    & 0.554  & 0.406    & 0.320  & 1.047    & 0.514  \\
    \bottomrule
    \end{tabular}%
  \label{privacy}%
  \vskip -0.1in
\end{table}%

\subsection{Effect of Prototype Size}
\label{append-proto}
The performance of FUELS with different setting of $d_r$ is presented in Table \ref{pro_size}. 
Prototype size can be adjusted for different prediction tasks. In general, better performance can be achieved with $d_r$ increasing, which on the other hand results in more computing and communication overhead.

\begin{table}[htbp]
  \centering
  \caption{The performance of FUELS with different settings of prototype size.}
  \vskip 0.1in
    \begin{tabular}{cccccccc}
    \toprule
    \multirow{2}[2]{*}{$d_r$} & \multicolumn{2}{c}{SMS} & \multicolumn{2}{c}{Call} & \multicolumn{2}{c}{Net} & \multicolumn{1}{c}{\multirow{2}[4]{*}[+1.5ex]{\makecell[c]{\# of Comm \\ Params} }} \\
          & MSE   & MAE   & MSE   & MAE   & MSE   & MAE   &  \\
    \midrule
    2     & 3.588  & 1.125  & 1.692  & 0.848  & 4.500  & 1.213  & 96 \\
    4     & 2.910  & 0.899  & 1.265  & 0.634  & 3.899  & 1.041  & 192 \\
    8     & 2.060  & 0.683  & 0.775  & 0.429  & 2.559  & 0.791  & 384 \\
    16    & 1.709  & 0.611  & 0.568  & 0.364  & 2.039  & 0.672  & 768 \\
    32    & 1.445  & 0.567  & 0.471  & 0.340  & 1.289  & 0.562  & 1536 \\
    64    & 1.348  & 0.551  & 0.412  & 0.321  & 1.051  & 0.517  & 3072 \\
    \rowcolor{blue!40} 128   & 1.249  & 0.541  & 0.353  & 0.311  & 0.880  & 0.488  & 6144 \\
    256   & 1.272  & 0.531  & 0.318  & 0.298  & 0.997  & 0.487  & 12288 \\
    512   & 1.248  & 0.538  & 0.326  & 0.300  & 0.990  & 0.486  & 24576 \\
    \bottomrule
    \end{tabular}%
  \label{pro_size}%
  \vskip -0.1in
\end{table}%


\subsection{Results on Other Dataset}
\label{other}
We conduct further experiments on the METR-LA, a widely-used dataset in the task of traffic flow forecasting. It includes traffic speed records of 207 loop detectors deployed in Los Angeles from 2012/01/01 to 2012/06/30 every 5 minutes. The experimental settings on METR-LA are the same as those on the other three datasets in Section \ref{perfom-set}. 
The experimental results of FUELS and baselines are presented in Table \ref{metr}. It is obvious that FUELS can also achieve the dominate performance compared with the baselines.

\begin{table}[htbp]
  \centering
  \caption{Numerical results on METR-LA dataset.}
  \vskip 0.1in
    \begin{tabular}{ccccccc}
    \toprule
    Metric & FedAvg & FedProx & FedRep & PerFedAvg & pFedMe & FUELS \\
    \midrule
    MSE   & 0.126  & 0.127  & 0.127  & 0.143  & 0.137  & 0.125  \\
    MAE   & 0.167  & 0.166  & 0.164  & 0.193  & 0.173  & 0.166  \\
    \bottomrule
    \end{tabular}%
  \label{metr}%
  \vskip -0.1in
\end{table}%

\begin{table}[htbp]
  \centering
  \caption{Numerical results of FUELS and four variants on SMS, Call, and Net datasets.}
  \vskip 0.1in
    \begin{tabular}{ccccccc}
    \toprule
    Dataset & \multicolumn{2}{c}{SMS} & \multicolumn{2}{c}{Call} & \multicolumn{2}{c}{Net} \\
    Metric & {MSE} & {MAE} & {MSE} & {MAE} & {MSE} &  {MAE} \\
    \midrule
    \rowcolor{yellow!40}w/o inter & 1.340   & 0.560  & 0.392  & 0.323  & 0.983   & 0.494  \\
    \rowcolor{yellow!40}w/o intra & 1.336   & 0.566  & 0.455  & 0.334  & 1.000  & 0.488  \\
    \rowcolor{red!40}w/o p-aware & 1.324  & 0.575  & 0.400   & 0.342  & 1.252   & 0.570  \\
    \rowcolor{pink!40}w/o \textit{W} & 1.480   & 0.582  & 0.445   & 0.337  & 1.627  & 0.633  \\
    \rowcolor{blue!40}FUELS & 1.249  & 0.541  & 0.353   & 0.311  & 0.880  & 0.488  \\
    \bottomrule
    \end{tabular}%
  \label{ab_full}%
  \vskip -0.1in
\end{table}%

\begin{figure}[ht]
\vskip 0.1in
    \centering
    \includegraphics[width=0.8\linewidth]{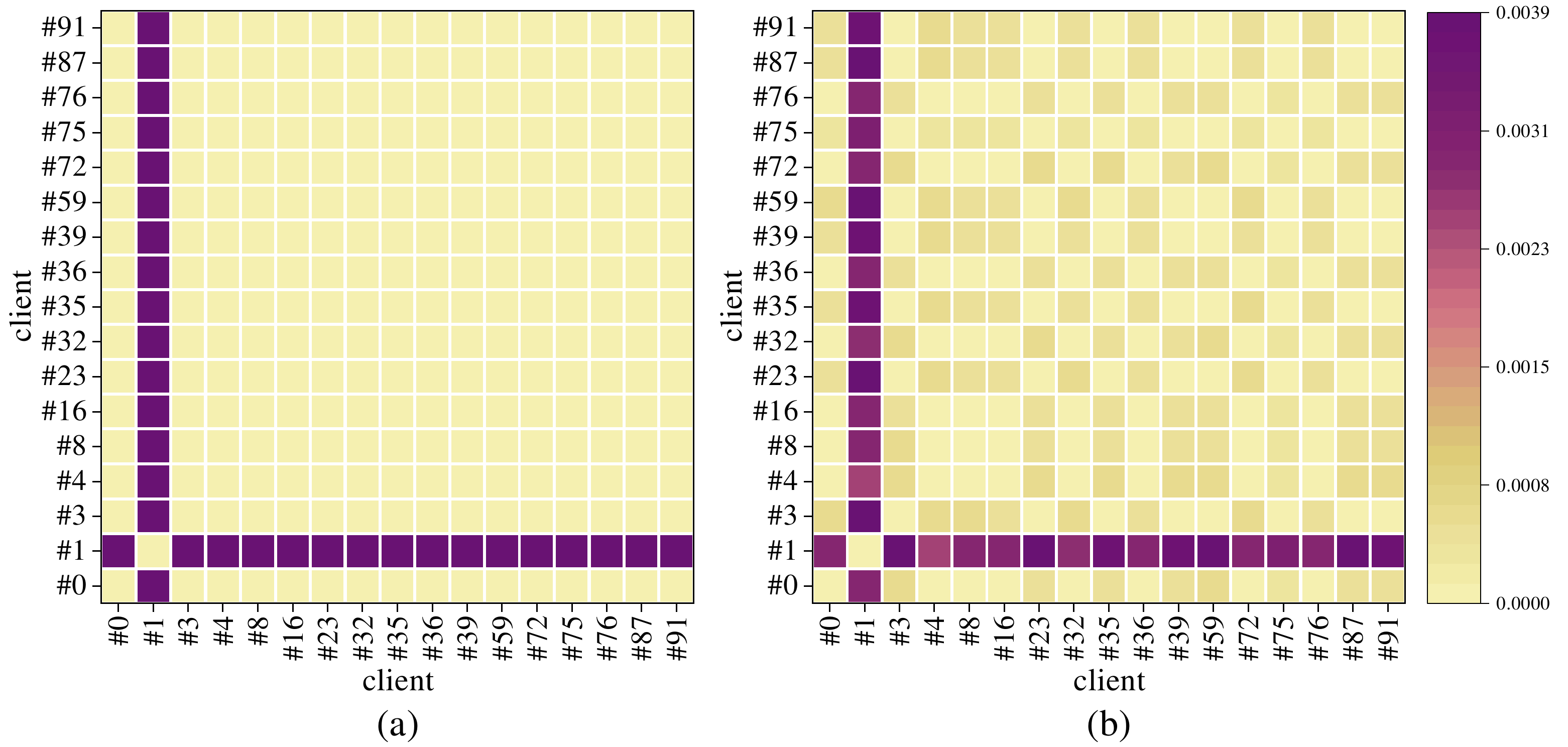}
    \caption{Results of JSD results for (a) local prototypes and (b) raw traffic data. In (a), if the JSD value of two local prototypes is less than $\beta$, the corresponding space is marked in yellow, otherwise in purple.}
    \label{js}
    \vskip -0.1in
\end{figure}

\begin{figure}[ht]
\vskip 0.1in
    \centering
    \includegraphics[width=0.75\linewidth]{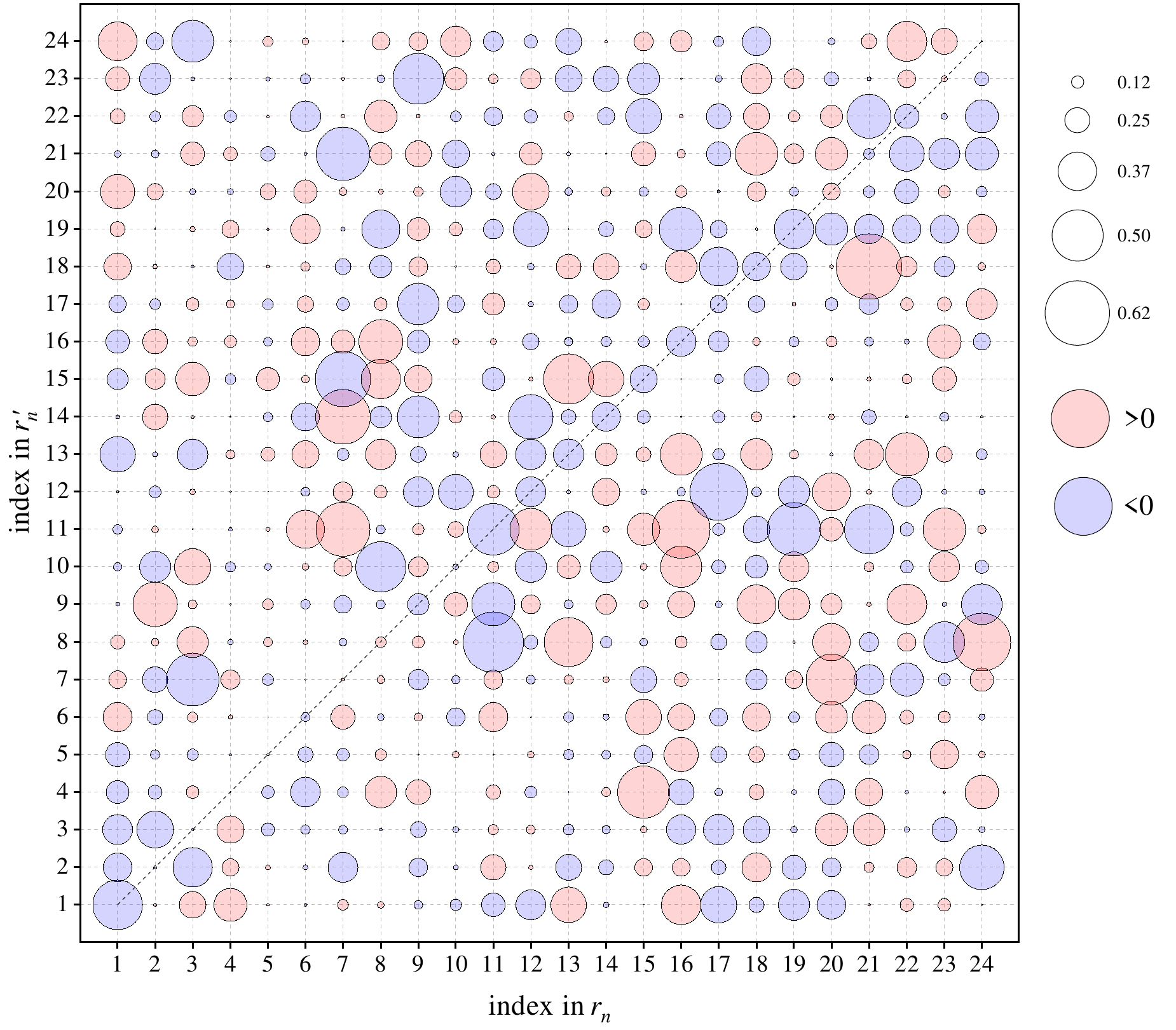}
    \caption{Visualization of $W_n$. The circle size represents the absolute value of a parameter. If a parameter is over 0, it is marked in pink, otherwise in purple.}
    \label{filter}
\vskip -0.1in
\end{figure}

\end{document}